\begin{document}
\title{Enhanced Distribution Modelling via Augmented Architectures For Neural ODE Flows}
%
%
\author{Etrit Haxholli, Marco Lorenzi}
\authorrunning{Etrit Haxholli, Marco Lorenzi}
%
\institute{Inria
\email{}\\
\url{etrit.haxholli@inria.fr}}
\maketitle              

\begin{abstract}
While the neural ODE formulation of normalizing flows such as in FFJORD enables us to calculate the determinants of free form Jacobians in $\mathcal{O}(D)$ time, the flexibility of the transformation underlying neural ODEs has been shown to be suboptimal. In this paper, we present AFFJORD, a neural ODE-based normalizing flow which enhances the representation power of FFJORD by defining the neural ODE through special augmented transformation dynamics which preserve the topology of the space. Furthermore, we derive the Jacobian determinant of the general augmented form by generalizing the chain rule in the continuous sense into the \emph{cable rule}, which expresses the forward sensitivity of ODEs with respect to their initial conditions. The cable rule gives an explicit expression for the Jacobian of a neural ODE transformation, and provides an elegant proof of the instantaneous change of variable. Our experimental results on density estimation in synthetic and high dimensional data, such as MNIST, CIFAR-10 and CelebA ($32\times32$), show that AFFJORD outperforms the baseline FFJORD through the improved flexibility of the underlying vector field.  
\end{abstract}
\section{Introduction}
Normalizing flows are diffeomorphic random variable transformations providing a powerful theoretical framework for generative modeling and probability density estimation \cite{rezende_NF}.
While the practical application of normalizing flows is generally challenging due to computational bottlenecks, most notably regarding the $\mathcal{O}(D^3)$ computation cost of the Jacobian determinant,  different architectures have been proposed in order to scale normalizing flows to high dimensions while at the same time ensuring the flexibility and bijectivity of the transformations \cite{rezende_NF,real_nvp,Kobyzev_2021}. The common strategy consists of placing different architectural restrictions on the model, to enforce special Jacobian forms, with less computationally demanding determinants. 


A noteworthy approach is based on neural ODEs \cite{node_chen2018}, as they enable us to calculate the determinants of free form Jacobians in $\mathcal{O}(D)$ time \cite{ffjord_chen2019}. 
More specifically, the rule for the instantaneous change of variable \cite{node_chen2018} provides an important theoretical contribution to normalizing flows, as it yields a closed form expression of the Jacobian determinant of a neural ODE transformation.

In this case, calculating the Jacobian determinant simplifies to calculating the integral of the divergence of the vector field along the transformation trajectory. Such models are known as Continuous Normalizing Flows (CNFs). In \cite{ffjord_chen2019}, these ideas are further explored and computational simplifications are introduced, notably the use of Hutchinson’s trace estimator \cite{hutchinson_trick_est}. The resulting model is named FFJORD.

\begin{figure}
\centering
\begin{tabular}{ll}
\includegraphics[scale=0.4]{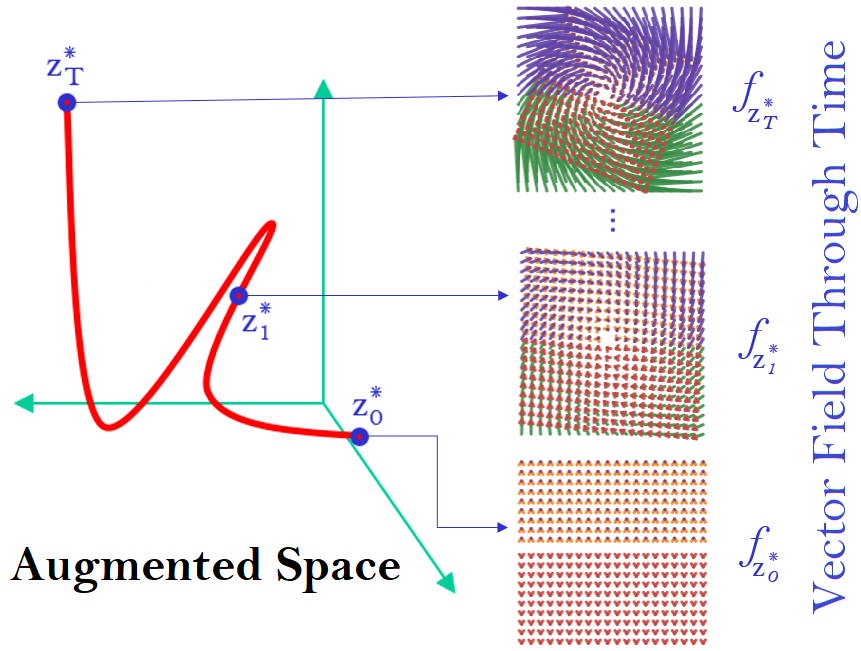}
\end{tabular}
\caption{Vector field evolution in AFFJORD as a function of the augmented dimensions $\boldsymbol{z}^{*}_t$.}
\label{vec_field_evol}
\end{figure}

In \cite{anode_dupont2019}, it is shown that there exist functions which neural ODEs are not capable of representing. To tackle this issue and enhance the expressiveness of the neural ODE transformation, they propose to lift the data into a higher dimensional space, on which the neural ODE is applied, to subsequently project the output back to the original space. Such augmented neural ODEs (ANODEs) have been experimentally shown to lead to improved flexibility and generalisation properties than the non-augmented counterpart.


Inspired by \cite{anode_dupont2019}, in this paper we develop a theoretical framework to increase the flexibility of ODE flows, such as FFJORD, through dimension augmentation. 
To this end, we derive an explicit formula for Jacobians corresponding to neural ODE transformations.
This formula represents the continuous generalization of the chain rule, which we name \emph{the cable rule}, that ultimately allows the derivation of the Jacobian expression and determinant for the composition of the operations defining the ANODE flow: augmentation, neural ODE transformation, and projection.  

To enable computational feasibility and ensure that the ANODE transformation is diffeomorphic, we allow the augmented dimensions to 
parameterize the vector field acting on the original dimensions (but not vice-versa). We name the resulting model augmented FFJORD (AFFJORD). In AFFJORD, the evolution of time is high dimensional (Figure \ref{vec_field_evol}), and is learnt via the vector field defined by the augmented component, contrasting the linear time evolution of FFJORD. This setup coincides with the framework introduced by \cite{zhang_2019}, but in the context of normalizing flows. Our experiments on 2D data of toy distributions and image datasets such as MNIST, CIFAR-10 and CelebA ($32\times32$), show that the proposed ANODE flow defined by AFFJORD outperforms FFJORD in terms of density estimation, thus highlighting the improved flexibility and representation properties of the underlying vector field. 

\section{Background and Related Work}

\textbf{Normalizing Flows:} The Normalizing Flow framework was previously defined in \cite{tabak_eijden,tabak_cristina}, and was popularised by \cite{rezende_NF}  and
by \cite{nice_2015} respectively in the context of variational inference and density estimation.
\newline
A Normalizing Flow is a transformation defined by a sequence of invertible and differentiable functions mapping a simple base probability distribution (e.g., a standard normal) into a more complex one. Let $\boldsymbol{Z}$ and $\boldsymbol{X}=g(\boldsymbol{Z})$ be random variables where $g$ is a diffeomorphism with inverse $h$. If we denote their probability density functions by $f_{\boldsymbol{Z}}$ and $f_{\boldsymbol{X}}$, based on the change of variable theorem we get:
\begin{equation}\label{s2}
f_{\boldsymbol{X}}(\boldsymbol{x})=f_{\boldsymbol{Z}}(\boldsymbol{z})|\frac{d\boldsymbol{z}}{d\boldsymbol{x}}|=f_{\boldsymbol{Z}}(h(\boldsymbol{x}))|\frac{d h(\boldsymbol{x})}{d\boldsymbol{x}}|.
\end{equation}
In general, we want to optimize the parameters of $h$ such that we maximize the likelihood of sampled points ${\boldsymbol{x}}_1,...,{\boldsymbol{x}}_n$. Once these parameters are optimized, then we can give as input any test point ${\boldsymbol{x}}$ on the right hand side of Equation \ref{s2}, and calculate its likelihood. For the generative task, being able to easily recover $g$ from $h$ is essential, as the generated point ${\boldsymbol{x}}_g$ will take form ${\boldsymbol{x}}_g$=$g(\boldsymbol{z}_s)$, where $\boldsymbol{z}_s$ is a sampled point from the base distribution $f_{\boldsymbol{Z}}$.
\newline
For increased modeling flexibility, we can use a chain (flow) of transformations, $\boldsymbol{z}_i=g_i(\boldsymbol{z}_{i-1}), i\in[n]$. In this case due to chain rule we have:
\begin{equation}\label{s3}
f_{\boldsymbol{Z}_n}(\boldsymbol{z}_n)=f_{\boldsymbol{Z}_0}(\boldsymbol{z}_0)|\frac{d\boldsymbol{z}_0}{d\boldsymbol{z}_n}|=f_{\boldsymbol{Z}_0}(h_0(...h_n(\boldsymbol{z}_n)))\prod_{i=1}^{n}| \frac{d h_i(\boldsymbol{z}_i)}{d\boldsymbol{z}_{i}}|
\end{equation}
The interested reader can find a more in-depth review of normalizing flows in \cite{Kobyzev_2021} and \cite{jmlr_summ_NF}.
\newline
\newline
\textbf{Neural ODE Flows:}  Neural ODEs, \cite{node_chen2018}, are continuous generalizations of residual networks:

\begin{equation}\label{neural_ODEs}
\boldsymbol{z}_{t_{i+1}}=\boldsymbol{z}_i+\epsilon f(\boldsymbol{z}_{t_i}, t_i, \boldsymbol{\theta}) \rightarrow \boldsymbol{z}(t)=\boldsymbol{z}(0) +\int_0^t f(\boldsymbol{z}(\tau),\tau,\boldsymbol{\theta}) d\tau \text{, as } \epsilon \rightarrow 0.
\end{equation}

In \cite{pontrjagin1962,node_chen2018}, it is shown that the gradients of neural ODEs can be computed via the adjoint method, with constant memory cost with regards to "depth":
\begin{equation}\label{adjoint_method}
       \frac{dL}{d \boldsymbol{\theta}}=
      \int_0^T \frac{\partial L}{\partial \boldsymbol{z}(t)} \frac{\partial f(\boldsymbol{z}(t), t, \boldsymbol{\theta})}{\partial \boldsymbol{\theta}}dt=-\int_T^0 \frac{\partial L}{\partial \boldsymbol{z}(t)} \frac{\partial f(\boldsymbol{z}(t), t, \boldsymbol{\theta})}{\partial \boldsymbol{\theta}}dt,
\end{equation}
where $\frac{\partial L}{\partial \boldsymbol{z}(t)}$ can be calculated simultaneously by
\begin{equation}\label{adjoint_method2}
    \frac{\partial L}{\partial \boldsymbol{z}(t)}=\frac{\partial L}{\partial \boldsymbol{z}(T)}-\int_T^t \frac{\partial L}{\partial \boldsymbol{z}(\tau)}\frac{\partial f(\boldsymbol{z}(\tau),\tau,\boldsymbol{\theta}(\tau)) }{\partial \boldsymbol{z}(\tau)}d\tau.
\end{equation}

In addition, they also derive the expression for the instantaneous change of variable, which enables one to train continuous normalizing flows:
\begin{equation}\label{instantaneous_change}
    \log{p(\boldsymbol{z}(0))}=\log{p(\boldsymbol{z}(T))}+\int_0^T tr{\frac{\partial  f(\boldsymbol{z}(t), t, \boldsymbol{\theta})}{\partial \boldsymbol{z}(t)}dt},
\end{equation}
where $\boldsymbol{z}(0)$ represents a sample from the data. 
A surprising benefit is that one does not need to calculate the determinant of the Jacobian of the transformation anymore, but simply the trace of a matrix. These models are collectively called Neural ODE flows (NODEFs) or simply Continuous Normalizing Flows (CNFs). In \cite{ffjord_chen2019}, these ideas are further explored and computational simplifications are introduced, notably the use of Hutchinson’s trace estimator, \cite{hutchinson_trick_est,adams_2018}, as an unbiased stochastic estimator of the trace in the likelihood expression in Equation \ref{instantaneous_change}. The resulting model is named FFJORD.
\newline
\newline
\textbf{Multiscale Architectures:}  In \cite{real_nvp}, a multiscale architecture for normalizing flows is implemented, which transforms the data shape from $[c,s,s]$ to $[4c,\frac{s}{2},\frac{s}{2}]$, where $c$ is the number of channels and $s$ is the height and width of the image. Effectively this operation trades spatial size for additional channels, and after each transformation a normalizing flow is applied on half the channels, while the other half are saved in an array. This process can be repeated as many times as the width and height of the transformed image remain even numbers. In the end, all saved channels are concatenated to construct an image with the original dimensions. A visual description of this process can be found in Figure \ref{multiscale}.
\begin{figure*}
\includegraphics[width=1\textwidth]{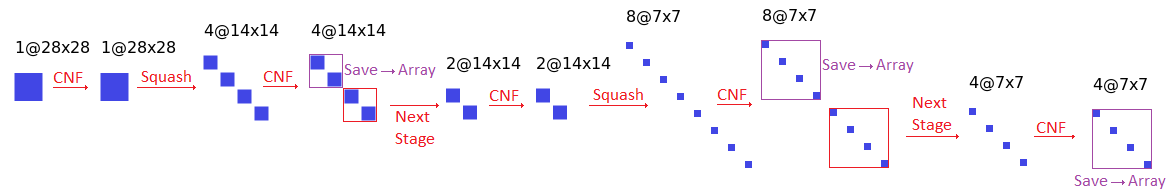}
\caption{Example of the multiscale architecture on MNIST dataset}
\label{multiscale}
\end{figure*}
\newline
\newline
\textbf{Augmented Neural ODEs:}  Considering that transformations by neural ODEs are diffeomorphisms (hence homeomorphisms), \cite{anode_dupont2019} show that Neural Ordinary Differential Equations (NODEs) learn representations that preserve the topology of the input space, and prove that this implies the
existence of functions that Neural ODEs cannot represent. To address these limitations, they introduce the Augmented Neural ODEs:
\begin{equation*}
    \frac{d}{dt}\begin{bmatrix}
\boldsymbol{z}(t)\\
\boldsymbol{z}^{*}(t)
\end{bmatrix}=h(\begin{bmatrix}
\boldsymbol{z}(t)\\
\boldsymbol{z}^{*}(t)
\end{bmatrix},t)=\begin{bmatrix}
f(\boldsymbol{z}(t),\boldsymbol{z}^*(t),\boldsymbol{\theta})\\
g(\boldsymbol{z}(t),\boldsymbol{z}^*(t),\boldsymbol{\theta})
\end{bmatrix}
\end{equation*}
\begin{equation}\label{augmented_Neural_ODEs}
\text{ for }\begin{bmatrix}
\boldsymbol{z}(0)\\
\boldsymbol{z}^{*}(0)
\end{bmatrix}=\begin{bmatrix}
\boldsymbol{x}\\
0
\end{bmatrix},
\end{equation}
where $\boldsymbol{z}^*(t)$ is the augmented component, and $h=[f,g]$ is the vector field to be learnt.

In addition to being more expressive models, \cite{anode_dupont2019} show that augmented neural ODEs are empirically more stable, generalize better and have a lower computational cost than Neural ODEs. A schematic of their architecture in the discrete case can be found in Figure \ref{dis_augNODE_archits}.

\begin{figure}
\centering
\begin{tabular}{ll}
\includegraphics[scale=0.3]{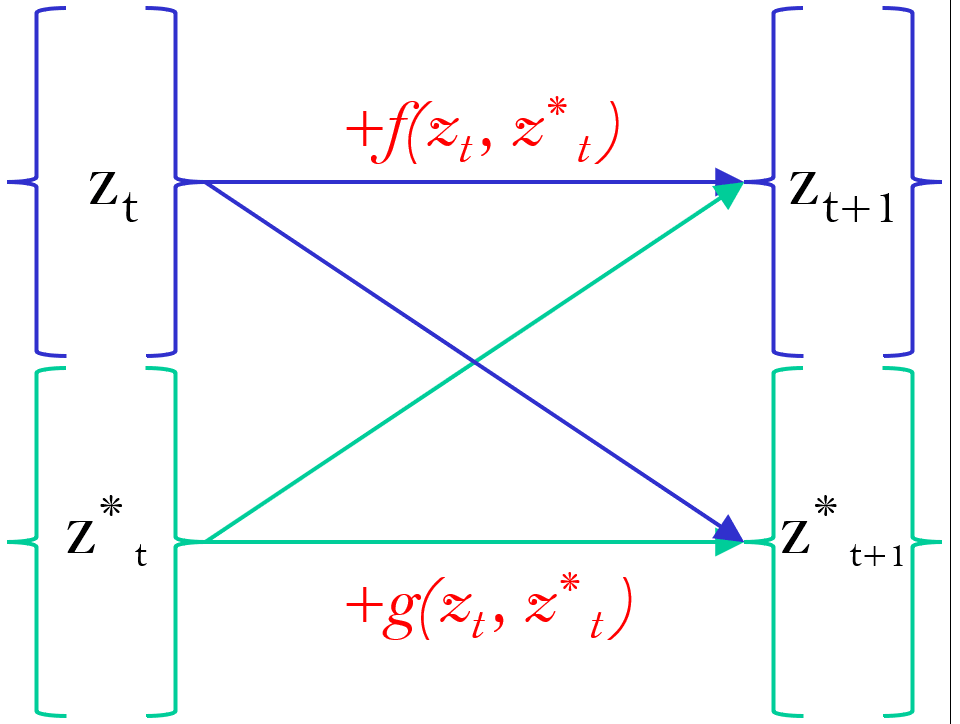}
\hfill
\includegraphics[scale=0.3]{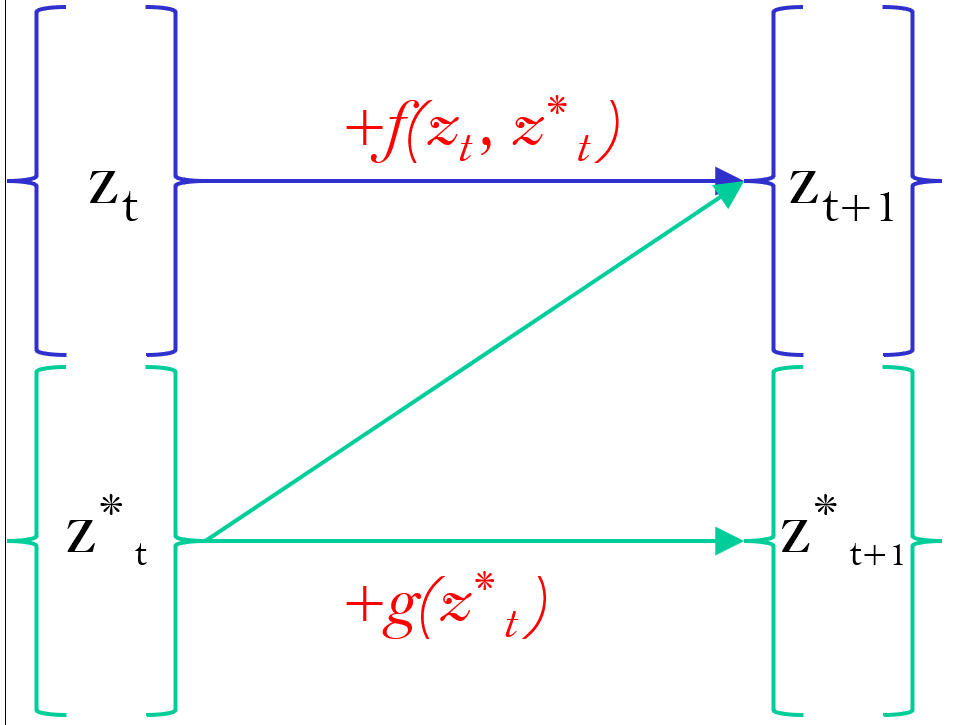}
\end{tabular}
\caption{Architecture of discretized augmented neural ODEs (left) and of the discretized augmented neural ODE flows implemented in AFFJORD (right).}
\label{dis_augNODE_archits}
\end{figure}

\section{Proposed Framework}

In the first subsection, we introduce our model AFFJORD, which is a special case of augmented neural ODEs. In the second subsection, we give the generalisation of the chain rule in the continuous sense which we refer to as the cable rule, which is analogous to forward sensitivity. Next, we give an intuitive proof of the continuous backpropagation, by showing its equivalence to the continuous generalisation of the total derivative decomposition. Then, using the cable rule, we give a more detailed explanation on why the instantaneous change of variable still holds in our model. In the final section, the augmented multiscale architecture is described, as it is implemented in the experimental section.

\subsection{Proposed Model: Augmented FFJORD (AFFJORD)}\label{aug_neural_ODE_flows}

A neural ODE of the type $f(\boldsymbol{z}(t),\boldsymbol{z}^*(t),\boldsymbol{\theta})$, where $\boldsymbol{z}^*(t)= \boldsymbol{z}^*(t)+ \int_0^t g(\boldsymbol{z}^*(\tau),\boldsymbol{\phi}) d\tau$,  can equivalently be written as $h_f(\boldsymbol{z}(t),t,\boldsymbol{\theta})=f(\boldsymbol{z}(t),\boldsymbol{z}^*(t),\boldsymbol{\theta})$. 
Thus, motivated by \cite{anode_dupont2019}, we propose to lift each data point $\boldsymbol{z}(0)$ $\in\mathbb{R}^n$ to a higher dimensional space $\mathbb{R}^{n+m}$, by augmentation via an $m$ dimensional vector $\boldsymbol{z}^*(0)$, which in practice is set to be the zero vector.
Therefore, the joint vector $[\boldsymbol{z}(0),\boldsymbol{z}^*(0)]$ is transformed by the vector field $h=(f,g)$:
\begin{equation}\label{augnodef_eq1r}
w(T)=\begin{bmatrix}
\boldsymbol{z}(T)\\
\boldsymbol{z}^{*}(T)
\end{bmatrix}=\begin{bmatrix}
\boldsymbol{z}(0)\\
\boldsymbol{z}^{*}(0)
\end{bmatrix}+\int_0^{T}\begin{bmatrix}
f(\boldsymbol{z}(t),\boldsymbol{z}^*(t),\boldsymbol{\theta})\\
g(\boldsymbol{z}^*(t),\boldsymbol{\phi})
\end{bmatrix}dt.
\end{equation}
We are not interested in $z^*(T)$ as our interest lies on the transformed $z(0)$, that is $z(T)$. Since by definition such coupled dynamics are contained in the formulation of neural ODEs, this implies that the instantaneous change of formula still holds, and the transformation is injective. This sort of augmentation can be seen as a special case of augmentation introduced in \cite{anode_dupont2019}, where the augmented dimensions depend only on themselves. The augmented dimensions $\boldsymbol{z}^*(t)$ can also be seen as time dependent weights of the non-autonomous $f$ whose evolution is determined by the autonomous ODE $g$, hence giving $f$ greater flexibility in time \cite{zhang_2019}.
\subsection{The Cable Rule}\label{subsec3.1}
If we define a chain of transformations
\begin{equation}\label{initialchain}
\boldsymbol{z}_i=g_i(\boldsymbol{z}_{i-1}) \text{ for } i \in \{1,...,n\},
\end{equation}
due to the chain rule we have:
\begin{equation*}
    \frac{d \boldsymbol{z}_n}{d \boldsymbol{z}_0}=\frac{\partial \boldsymbol{z}_n}{\partial \boldsymbol{z}_{n-1}}\frac{d \boldsymbol{z}_{n-1}}{d \boldsymbol{z}_{0}}=\frac{\partial \boldsymbol{z}_n}{\partial \boldsymbol{z}_{n-1}}\frac{\partial \boldsymbol{z}_{n-1}}{\partial \boldsymbol{z}_{n-2}}...\frac{\partial  \boldsymbol{z}_1}{\partial \boldsymbol{z}_0}=
\end{equation*}
\begin{equation}\label{cablerule1}
=\frac{\partial g_n(\boldsymbol{z}_{n-1})}{\partial \boldsymbol{z}_{n-1}}\frac{\partial g_{n-1}(\boldsymbol{z}_{n-2})}{\partial \boldsymbol{z}_{n-2}}...\frac{\partial g_1 (\boldsymbol{z}_0)}{\partial \boldsymbol{z}_0}.
\end{equation}
We can choose each $g_i$ to infinitesimally modify its input, i.e., $g_i(\boldsymbol{z}_{i-1})=\boldsymbol{z}_{i-1}+\epsilon f_i(\boldsymbol{z}_{i-1},t_{i-1},\boldsymbol{\theta})$, so that the chain in Expression \ref{initialchain} transforms $\boldsymbol{z}_0$ continuously. It is clear that $\boldsymbol{z}(t)=\boldsymbol{z}(0) +\int_0^t f(\boldsymbol{z}(t),t,\boldsymbol{\theta}) d\tau$ is the limit of the previous iterative definition when $\epsilon \rightarrow 0$. Then the expression $\frac{d \boldsymbol{z}_n}{d \boldsymbol{z}_0}$ in Equation \ref{cablerule1} converges to $\frac{d\boldsymbol{z}(t)}{d\boldsymbol{z}(0)}$, which as we show in Appendix \ref{appendixA}, satisfies the differential equation below:
\begin{equation}\label{cablerule1b}
\frac{d(\frac{d\boldsymbol{z}(t)}{d\boldsymbol{z}(0)})}{dt}=\frac{\partial f(\boldsymbol{z}(t))}{\partial \boldsymbol{z}(t)}\frac{d\boldsymbol{z}(t)}{d\boldsymbol{z}(0)}.
\end{equation}
Using the Magnus expansion \cite{magnus_1954,Blanes_2009}, we conclude that
\begin{equation}\label{cablerule2}
    \frac{d \boldsymbol{z}(T)}{d \boldsymbol{z}(0)}=e^{\boldsymbol{\Omega}(T)}, \text{ for\ \  } \boldsymbol{\Omega}(t)=\sum_{k=1}^{\infty}\boldsymbol{\Omega}_k(t),
\end{equation} 
where $\boldsymbol{\Omega}_i(t)$ are the terms of the Magnus expansion (See Appendix \ref{appendixA}). As this expression gives the generalisation of the chain rule in the continuous sense, we refer to it as the cable rule. We notice that Equation \ref{cablerule1b} gives the dynamics of the Jacobian of the state with respect to the initial condition $\boldsymbol{z(0)}$. The cable rule is therefore analogous to the forward sensitivity formula for ODEs which provides the dynamics of Jacobian of the state with respect to the parameters $\boldsymbol{\theta}$ of the flow  \cite{forward_sens2013}. This relation is highlighted and explained in more detail in Appendix A. On this note, in Appendix \ref{appendixG}, we derive the cable rule via Equation \ref{adjoint_method2}. Furthermore, in Appendix \ref{appendixE}, we derive the instantaneous change of variables from the cable rule. 


\subsection{The Continuous Generalisation of the Total Derivative Decomposition and Continuous Backpropagation}\label{adsens}
In the case that $f=f(\boldsymbol{x}(\boldsymbol{\theta}),\boldsymbol{y}(\boldsymbol{\theta}))$, then $\frac{df}{d\boldsymbol{\theta}}=\frac{\partial f}{\partial \boldsymbol{x}}\frac{d\boldsymbol{x}}{d\boldsymbol{\theta}}+\frac{\partial f}{\partial \boldsymbol{y}}\frac{d\boldsymbol{y}}{d\boldsymbol{\theta}}$, as $\boldsymbol{\theta}$ contributes to both $\boldsymbol{x}$ and $\boldsymbol{y}$, which in turn determine $f$.
If $\boldsymbol{z}(T)=\boldsymbol{z}(0)+\int_0^T f(\boldsymbol{z}(t),\boldsymbol{\theta}(t), t)dt$, then $\boldsymbol{\theta}$ controls the vector field at each time point during integration, hence we expect that these infinitesimal contributions of the transformation from $\boldsymbol{z}(0)$ to $\boldsymbol{z}(T)$ should be integrated. Indeed, as we prove in Appendix \ref{appendixB}, the following holds:
\begin{equation}\label{conttotderiv1}
      \frac{d\boldsymbol{z}(T)}{d \boldsymbol{\theta}}=\int_0^T \frac{\partial \boldsymbol{z}(T)}{\partial \boldsymbol{z}(t)} \frac{\partial f(\boldsymbol{z}(t),\boldsymbol{\theta} (t))}{\partial \boldsymbol{\theta}(t)}\frac{\partial \boldsymbol{\theta}(t)}{\partial \boldsymbol{\theta}}dt,
\end{equation} from which for  $\boldsymbol{\theta}(t)=\boldsymbol{\theta}$, and for some function $L=L(\boldsymbol{z}(T))$, we deduce:
\begin{equation}\label{conttotderiv2}
      \frac{dL}{d\boldsymbol{\theta}}=\frac{\partial L}{\partial \boldsymbol{z}(T)}\frac{d \boldsymbol{z}(T)}{d \boldsymbol{\theta}}=\int_0^T \frac{\partial L}{\partial \boldsymbol{z}(T)} \frac{\partial \boldsymbol{z}(T)}{\partial \boldsymbol{z}(t)} \frac{\partial f(\boldsymbol{z}(t),\boldsymbol{\theta})}{\partial \boldsymbol{\theta}}dt=\int_0^T \frac{\partial L}{\partial \boldsymbol{z}(t)}  \frac{\partial f(\boldsymbol{z}(t),\boldsymbol{\theta})}{\partial \boldsymbol{\theta}}dt
\end{equation}
thus giving an alternative and intuitive proof of continuous backpropagation \cite{pontrjagin1962,node_chen2018}.  In Appendix H, we show that the adjoint method can be used to prove Equation \ref{conttotderiv1}, hence the continuous total derivative decomposition is equivalent to continuous backpropagation. An alternate derivation of Equation \ref{conttotderiv1} can be found in the Appendix of \cite{dissecting_nodes}

\subsection{AFFJORD as a Special Case of Augmented Neural ODE Flows}\label{aug_neural_ODE_flows2}
As discussed in subsection \ref{aug_neural_ODE_flows}, the ODE dynamics in Equation \ref{augnodef_eq1r}, can be seen as a special case of the following joint ODE transformation:
\begin{equation}\label{augnodef_eq1}
w(T)=\begin{bmatrix}
\boldsymbol{z}(T)\\
\boldsymbol{z}^{*}(T)
\end{bmatrix}=\begin{bmatrix}
\boldsymbol{z}(0)\\
\boldsymbol{z}^{*}(0)
\end{bmatrix}+\int_0^{T}\begin{bmatrix}
f(\boldsymbol{z}(t),\boldsymbol{z}^*(t),\boldsymbol{\theta})\\
g(\boldsymbol{z}^*(t),\boldsymbol{z}(t),\boldsymbol{\phi})
\end{bmatrix}dt.
\end{equation}
In this general form, the model is unsuitable to be used in practice for two reasons:
\newline
1) The transformation of $\boldsymbol{z}(0)$ to $\boldsymbol{z}(T)$ is not necessarily injective. Indeed, the transformation from $[\boldsymbol{z}(0),\boldsymbol{z}^*(0)]$ to $[\boldsymbol{z}(T),\boldsymbol{z}^*(T)]$ is injective due to the Picard–Lindelöf Theorem, however, for two data points $\boldsymbol{z'}(0)$ and $\boldsymbol{z''}(0)$, their images $\boldsymbol{z'}(T)$, $\boldsymbol{z''}(T)$ might be identical as long as their respective augmented dimensions $\boldsymbol{z'}^{*}(T)$, $\boldsymbol{z''}^{*}(T)$ differ. 
\newline
2) The Jacobian determinant of this general transformation is computationally intractable. Using the chain rule we can express the Jacobian determinant of this transformation as
\begin{equation}\label{pre1}
\big|\frac{d \boldsymbol{z}(T)}{d \boldsymbol{z}(0)}\big|=
\big|\frac{d \boldsymbol{z}(T)}{d \begin{bmatrix}
\boldsymbol{z}(T),\boldsymbol{z}^{*}(T)
\end{bmatrix}}\frac{d \begin{bmatrix}
\boldsymbol{z}(T),
\boldsymbol{z}^{*}(T)
\end{bmatrix}}{d \begin{bmatrix}
\boldsymbol{z}(0),
\boldsymbol{z}^{*}(0)
\end{bmatrix}}\frac{d \begin{bmatrix}
\boldsymbol{z}(0),
\boldsymbol{z}^{*}(0)
\end{bmatrix}}{d \boldsymbol{z}(0)}\big|.
\end{equation}
The middle term on the RHS of Equation \ref{pre1} can be further developed via the cable rule, to give the expression of the determinant of the Jacobian of augmented neural ODE flows, which is not computationally feasible in general.
\newline
As explained in Section \ref{aug_neural_ODE_flows}, the special case (AFFJORD) formulated in Equation \ref{augnodef_eq1r}, mitigates the issues mentioned above. 
\newline
Regarding issue 1), we have the following:
\begin{proposition}\label{prop1} The architecture of AFFJORD ensures that the transformation is injective. 
\end{proposition}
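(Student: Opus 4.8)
The plan is to exploit the single structural feature that distinguishes AFFJORD from the general augmented flow of Equation \ref{augnodef_eq1}: in Equation \ref{augnodef_eq1r} the augmented dynamics $g(\boldsymbol{z}^*(t),\boldsymbol{\phi})$ depend only on $\boldsymbol{z}^*$ and not on $\boldsymbol{z}$, while every data point is lifted with the \emph{same} initial augmented value $\boldsymbol{z}^*(0)=\boldsymbol{0}$. Assuming $f$ and $g$ are (locally) Lipschitz, so that the Picard--Lindel\"of theorem applies and solutions are unique both forward and backward in time, the argument is then short.

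First I would note that the augmented component solves the autonomous initial value problem $\dot{\boldsymbol{z}}^*(t)=g(\boldsymbol{z}^*(t),\boldsymbol{\phi})$ with $\boldsymbol{z}^*(0)=\boldsymbol{0}$, whose right-hand side and initial condition do not involve $\boldsymbol{z}(0)$. By uniqueness of solutions there is therefore a single trajectory $\boldsymbol{\zeta}(t)$ with $\boldsymbol{z}^*(t)=\boldsymbol{\zeta}(t)$ for \emph{every} input, independent of $\boldsymbol{z}(0)$; in particular $\boldsymbol{z}^*(T)=\boldsymbol{\zeta}(T)$ is common to all data points. This is exactly the observation that closes the loophole described in issue 1): the two images can no longer be separated through differing augmented coordinates, because the augmented coordinates are identical for all inputs.

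Second, I would invoke the full-flow injectivity already granted by Picard--Lindel\"of, namely that the map $[\boldsymbol{z}(0),\boldsymbol{z}^*(0)]\mapsto[\boldsymbol{z}(T),\boldsymbol{z}^*(T)]$ is injective. Suppose two inputs $\boldsymbol{z'}(0)$ and $\boldsymbol{z''}(0)$ satisfy $\boldsymbol{z'}(T)=\boldsymbol{z''}(T)$. Since their augmented parts both equal $\boldsymbol{\zeta}(T)$, the full images $[\boldsymbol{z'}(T),\boldsymbol{\zeta}(T)]$ and $[\boldsymbol{z''}(T),\boldsymbol{\zeta}(T)]$ coincide, so injectivity of the full flow forces $\boldsymbol{z'}(0)=\boldsymbol{z''}(0)$, proving injectivity of $\boldsymbol{z}(0)\mapsto\boldsymbol{z}(T)$. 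Equivalently, one can use the reduction noted at the start of Section \ref{aug_neural_ODE_flows}, writing the $\boldsymbol{z}$-dynamics as the non-autonomous ODE $\dot{\boldsymbol{z}}(t)=f(\boldsymbol{z}(t),\boldsymbol{\zeta}(t),\boldsymbol{\theta})=:h_f(\boldsymbol{z}(t),t,\boldsymbol{\theta})$ and applying backward uniqueness of this initial value problem directly.

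The main obstacle is conceptual rather than technical: the entire argument hinges on recognising that decoupling $g$ from $\boldsymbol{z}$, together with a shared augmented initialisation, makes the augmented trajectory input-independent, at which point standard ODE uniqueness does the rest. The only hypothesis that must be made explicit is the Lipschitz regularity of $f$ and $g$ required for Picard--Lindel\"of; without it, neither the uniqueness of $\boldsymbol{\zeta}(t)$ nor the backward uniqueness used to conclude injectivity is guaranteed.
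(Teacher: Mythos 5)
Your argument is correct and is essentially the paper's own proof: you observe that the augmented trajectory is input-independent because $g$ depends only on $\boldsymbol{z}^*$ and all inputs share $\boldsymbol{z}^*(0)=\boldsymbol{0}$, and then conclude injectivity of $\boldsymbol{z}(0)\mapsto\boldsymbol{z}(T)$ from the Picard--Lindel\"of injectivity of the full flow. The only difference is that you make the Lipschitz hypothesis explicit and add an equivalent closing via the non-autonomous reduction, both of which the paper leaves implicit.
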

\begin{proof}
Indeed, as $\boldsymbol{z}^*(t)$ is not dependent on external factors, and since $\boldsymbol{z}^*(0)$ is constant regarding $\boldsymbol{z}(0)$, the end result $\boldsymbol{z}^*(T)$ will always be the same. Hence, for $\boldsymbol{z'}(0)$ and $\boldsymbol{z''}(0)$ their images $\boldsymbol{z'}(T)$ and $\boldsymbol{z''}(T)$ must be different, since their equality would imply $[\boldsymbol{z'}(T),\boldsymbol{z}^{*}(T)]=[\boldsymbol{z''}(T),\boldsymbol{z}^{*}(T)]$ contradicting the Picard–Lindelöf Theorem.
\end{proof}
Issue 2) is mitigated as Equation \ref{pre1} simplifies to 
\begin{equation}\label{pre3b}
\big|\frac{d \boldsymbol{z}(T)}{d \boldsymbol{z}(0)}\big|=\big|\begin{bmatrix}
I,0
\end{bmatrix}
{ \begin{bmatrix}
e^{\int_0^T \frac{\partial f(\boldsymbol{z}(t),\boldsymbol{z}^*(t),\boldsymbol{\theta})}{\partial \boldsymbol{z}(t)} dt+\boldsymbol{\Omega}^{[z]}_2(T)+...}& \bar{\boldsymbol{B}}(T)\\
 0 & \bar{\boldsymbol{D}}(T)
\end{bmatrix}}
\begin{bmatrix}
I\\
0
\end{bmatrix}\big|,
\end{equation}
for two block matrices $\bar{\boldsymbol{B}}(T)$ and $\bar{\boldsymbol{D}}(T)$. This is proven in Appendix \ref{appendixF}. In this case, the $[z]$ in $\boldsymbol{\Omega}^{[z]}_2(T)$ denotes the restriction of the Magnus expansion to the original dimensions. In case that the base distribution is multivariate normal, from
\begin{equation*}
-\log{p(\boldsymbol{z}(0))}=-\log\big[{p(\boldsymbol{z}(T))}\ \big| e^{\int_0^T \frac{\partial f(\boldsymbol{z}(t),\boldsymbol{z}^*(t),\boldsymbol{\theta})}{\partial \boldsymbol{z}(t)} dt+...} \big|\big]
\end{equation*}
we derive the following loss function:
\begin{equation}\label{3.10}
L=\frac{||\boldsymbol{\boldsymbol{Z}}(T)||^2}{2}- {\int_{0}^{T}tr\frac{\partial f(\boldsymbol{z}(t),\boldsymbol{z}^*(t))}{\partial \boldsymbol{z}(t)}dt}.
\end{equation}





\subsection{Multiscale Architecture in Augmented Neural ODE Flows}
\begin{figure}
\includegraphics[width=1\textwidth]{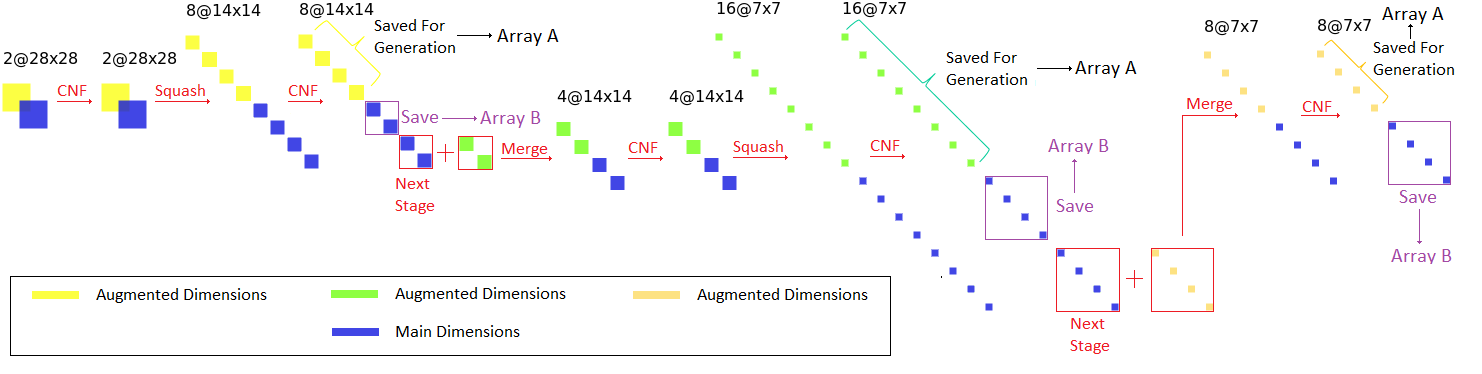}
\caption{Example of the augmented multiscale architecture on MNIST dataset}
\label{aug_multiscale}
\end{figure}

With reference to Figure \ref{aug_multiscale}, the multiscale architecture in the augmented case performs an Augmented Continuous Flow on the data as described in Subsection \ref{aug_neural_ODE_flows}, then squeezes the channels (the augmented as well as the original channels) as in the original multiscale architecture. However after the second transformation the augmented channels are removed and stored in a separated array (Array A). The data channels are treated as before, that is, half of them are saved (Array B), and the other half are squeezed again. After this process is finished we add new augmented channels, to repeat the cycle. Note that in order to generate data by the inverse transformation, we need to retrieve the transformed augmented dimensions previously stored (i.e., Array A).

\section{Experiments}

We compare the performance of AFFJORD with respect to the base FFJORD on 2D data of toy distributions, as well as on standard benchmark datasets such as MNIST, CIFAR-10 and CelebA($32\times32$). In the case of the 2D data, we use the implementation of CNFs provided in \cite{node_chen2018}, since using the Hutchinson's trace estimator and GPUs as in FFJORD provides no computational benefits in low dimensions. In this case we also use the non-adaptive Runge-Kutta 4 ODE solver, while for image data we use Dopri5, as well as the FFJORD implementation of \cite{ffjord_chen2019}. We use a batch size of $200$ for image data and a batch size of $512$ for the toy datasets. In the case of image data, we use a learning rate of $6\times 10^{-4}$, while for toy data the learning rate is set to $10^{-3}$. All experiments were performed on a single GPU. 
\subsection{Toy 2D Datasets}

\begin{figure}
\center
\includegraphics[width=0.85\textwidth]{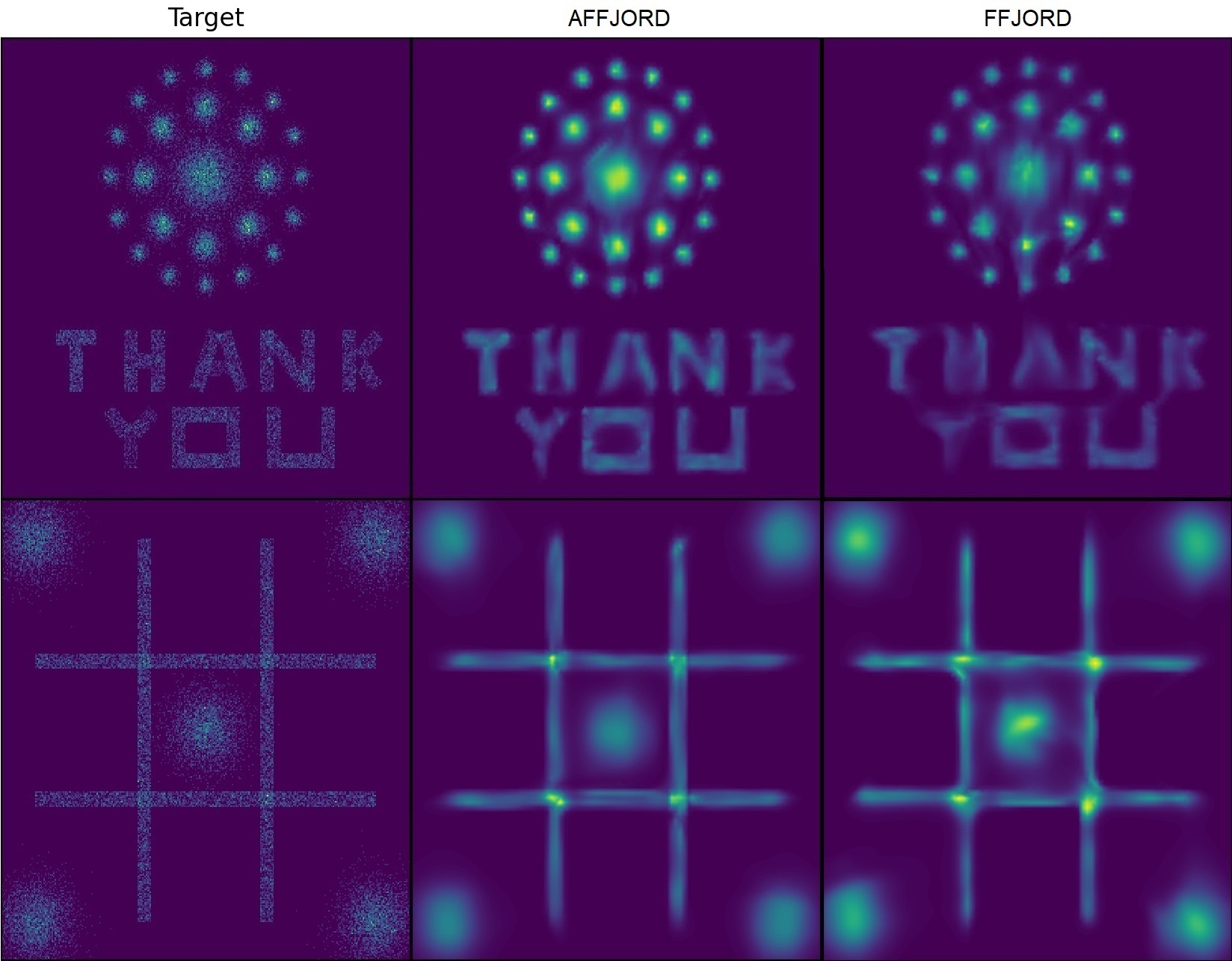}
\caption{Probability density modeling capabilities of  AFFJORD (second column) and FFJORD (third column) on 2D data of toy distributions.}
\label{toy2dimage}
\end{figure}

In order to visualise the performance of the model, we first test FFJORD and AFFJORD on 2D data of toy distributions, depicted in Figure \ref{toy2dimage}. In both cases we use the Runge-Kutta 4 solver with 40 time steps (160 function evaluations). In these examples, we used a hypernet architecture, where the augmented dimensions were fed to a hypernet (denoted as $hyp$) in order to generate the weights of the field of the main dimensions. The expression of the vector field to be learnt is the following: $\frac{d\boldsymbol{z}}{dt}=f\big([\boldsymbol{z}(t),\boldsymbol{z}^*(t)],\boldsymbol{\theta}(t)=hyp(\boldsymbol{z}^*(t),\boldsymbol{w})\big)$, where $\frac{d\boldsymbol{z}^*(t)}{dt}=g(\boldsymbol{z}^*(t),\boldsymbol{\phi})$. Thus, the learnable parameters are $\boldsymbol{w}$ and $\boldsymbol{\phi}$.

While both FFJORD and AFFJORD are capable of modelling multi-modal and discontinuous distributions, Figure \ref{toy2dimage} shows that AFFJORD has higher flexibility in modeling the complex data distributions considered, in comparison to FFJORD. In the first row, the target is the TY distribution, where the datapoints form letters and a cluster of Gaussian distributions. AFFJORD is more capable of separating the Gaussian spheres and modelling the shape of the text.

On the second row AFFJORD is capable of separating the Gaussian distribution in the center from the square that surrounds it. Furthermore, it separates the Gaussians in the corners from the hash symbol properly. In Figure \ref{toy2dgraphs}, we show the results of the validation loss per iteration for both models. We can notice that the loss of our model is roughly two standard deviations lower than the one of FFJORD. For each model the experiment was repeated 30 times. The experiments provided here show that AFFJORD is characterized by high flexibility of the vector field. Indeed, in FFJORD the vector field changes more slowly, whereas in AFFJORD the field is able to change almost abruptly, due to this greater flexibility in time \footnote{Videos showing the comparison of dynamics between FFJORD and AFFJORD can be found at https://imgur.com/gallery/kMGCKve}.
\begin{figure}
\centering
\begin{tabular}{ll}
\includegraphics[scale=0.42]{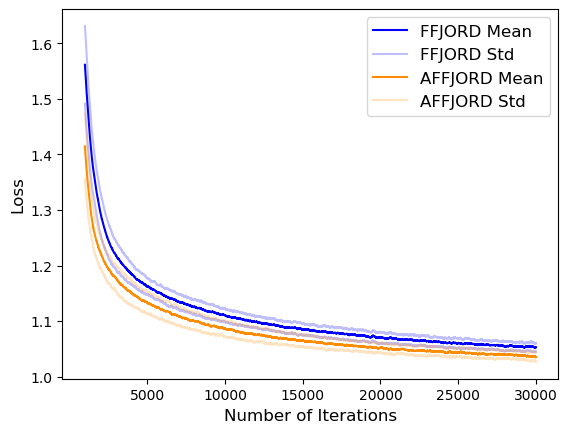}
&
\includegraphics[scale=0.42]{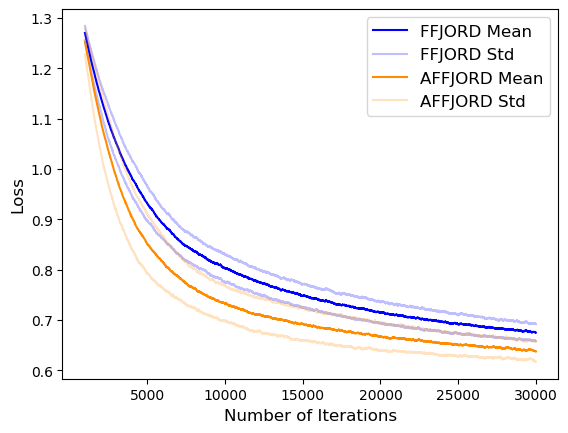}
\end{tabular}
\caption{Performance of FFJORD and AFFJORD on the Hash-Gaussian toy 2D dataset (left) and TY toy 2D dataset (right).}
\label{toy2dgraphs}
\end{figure}
It is important to emphasize that AFFJORD retains the ability to generate samples from the learnt distribution, by simply integrating in the opposite direction. Indeed, for a given sample $\boldsymbol{z}_s(T)$ from the base distribution, we augment it to $\boldsymbol{z}^*(T)$, as $\boldsymbol{z}^*(T)$ is the same for all data points. Then, we can simply integrate backwards this concatenated vector $[\boldsymbol{z}_s(T),\boldsymbol{z}^*(T)]$ to $[\boldsymbol{z}_s(0),\boldsymbol{z}^*(0)]$, drop the generated augmented dimensions $\boldsymbol{z}^*(0)$, and simply keep the generated data point $\boldsymbol{z}_s(0)$.

\subsection{Image Datasets}

We show that AFFJORD outperforms FFJORD on MNIST, CIFAR-10 and CelebA ($32\times32$). There are several architectures of FFJORD that can be used for this application. Out of the architectures that we tested, the one that performed best was the multiscale one, with three convolutional layers with 64 channels each. The number of CNF blocks was 1, and time was implemented by simply concatenating it as a channel into the data. 

\begin{table}[b]
  \caption{  Experimental Results for Density Estimation Models, in Bits/Dim for MNIST, CIFAR-10 and CelebA ($32\times32$). Lower Is Better. The Multiscale Architecture Is Used in All Cases.}

  \label{sample-table}
  \centering
  \begin{tabular}{llll}
    \\
       MODEL  & MNIST    & CIFAR10 & CelebA($32\times32$)\\
    \midrule
    Real NVP  & 1.06  & 3.49 &  -  \\
    Glow     & 1.05 & 3.35 &  - \\
    FFJORD   & 0.96 $\pm .00$   & 3.37 $\pm .00$ \ & 3.28$\pm .00$\\
    AFFJORD   & $\mathbf{0.95}\pm .01$      & $\mathbf{3.32}\pm .01$  & $\mathbf{3.23} \pm .00$\\
    \bottomrule
  \end{tabular}
\end{table}

When time was implemented via a hypernet, we observed that the training time increased and performance decreased, especially in the case of CIFAR-10. For AFFJORD we use the exact same base architecture, however, as in the case of 2D toy data, we enable the evolution of the vector fields through time via a hypernet which takes the augmented dimensions as an input, and outputs the weights of the main component. 
The augmented dimensions are concatenated as a channel to the main channel of the data. 
\begin{figure}
\centering
\begin{tabular}{ll}
\includegraphics[scale=0.44]{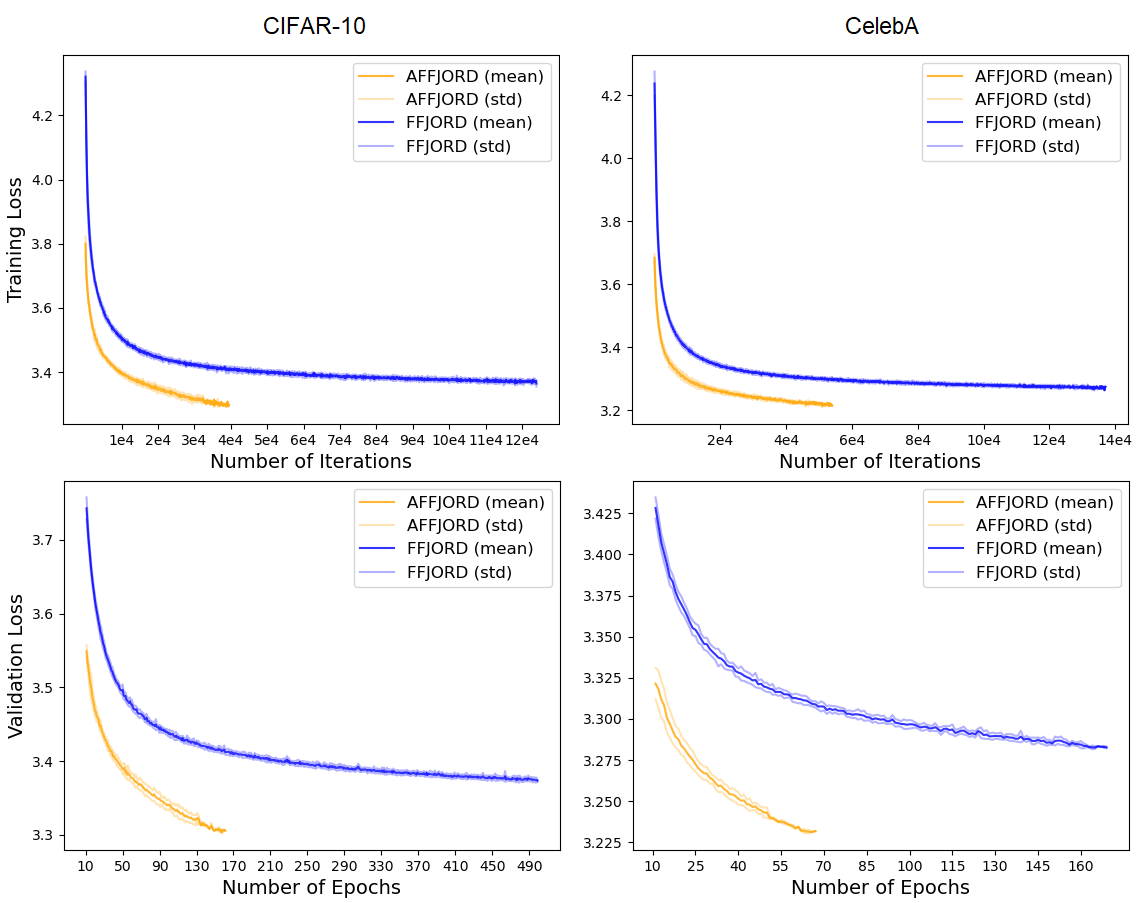}
\end{tabular}
\caption{Graphs of training and evaluation losses of FFJORD and AFFJORD on CIFAR-10 as well as CelebA ($32\times32$). We run the experiments 5 times. Lower is better.}
\label{all_realdata_losses}
\end{figure}
 The formulas for both the main field and the augmented component remain unchanged from the case of the 2D toy data, that is, $\frac{d\boldsymbol{z}}{dt}=f\big([\boldsymbol{z}(t),\boldsymbol{z}^*(t)],\boldsymbol{\theta}(t)=hyp(\boldsymbol{z}^*(t),\boldsymbol{w})\big)$ and  $\frac{d\boldsymbol{z}^*(t)}{dt}=g(\boldsymbol{z}^*(t),\boldsymbol{\phi})$. 

The main difference here is that $g(\boldsymbol{z}^*(t),\boldsymbol{\phi})$ is a fully connected network with one hidden layer, which does not take as input all the dimensions of $\boldsymbol{z}^*(t)$ but merely $20$ of them. Number $20$ was chosen as during fine-tuning, the best performance was reached in this setting.
The width of the hidden layer is also $20$, as it is the output. We fix $10$ of these $20$ dimensions and feed them to a linear hypernet with weight matrix shape $[10,p]$ to output the $p$ weights for the main component. It should be emphasized that the architecture of FFJORD in the main dimensions remains unchanged in AFFJORD for fair comparison, and we only fine-tuned the augmented structure in addition. Additional details about the experimental settings can be found in Appendix \ref{appendixI}.

As we show in Table \ref{sample-table}, AFFJORD slightly outperforms FFJORD on MNIST on the best run, since both models reach optimal performance, as seen from the generated samples in Figure \ref{samplesrealdata}.
However, our model outperforms FFJORD on the CIFAR-10 and CelebA ($32\times32$) dataset, as illustrated in Table \ref{sample-table}, as well as in Figure \ref{all_realdata_losses}. Based on the conducted experiments the farther FFJORD is from optimal performance, the larger the improvements brought by AFFJORD are. The calculation of results is done as in \cite{ffjord_chen2019}, where for each run the best evaluation result over epochs is taken. After 5 runs, for each model, the scores are averaged and reported in Table \ref{sample-table}.

\begin{figure}
\centering
\begin{tabular}{ll}
\includegraphics[scale=0.485]{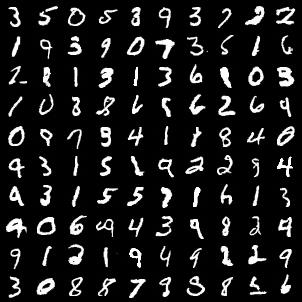}
&
\includegraphics[scale=0.43]{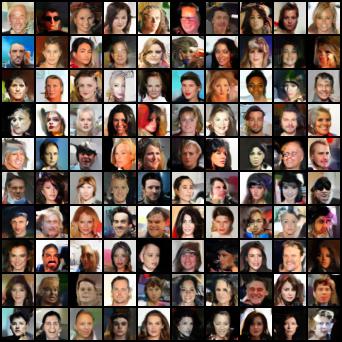}
\end{tabular}
\caption{Samples generated from AFFJORD: MNIST and CelebA $32\times32$.}
\label{samplesrealdata}
\end{figure}

In the case of MNIST, both models were trained for roughly 9 days, while in the case of CIFAR-10 and CelebA ($32\times32$), they were trained for approximately 14 days. The results corresponding to the Real NVP and Glow models, are taken from the original papers: \cite{real_nvp} and \cite{kingma_glow}.

As in the previous case, AFFJORD can generate samples by backintegrating. However, due to the use of the augmented multiscale architecture, where for each cycle we replace the augmented dimensions, these replaced augmented dimensions must be saved in an array for the backward generative pass. Examples of samples from AFFJORD are shown in Figure \ref{samplesrealdata} for both MNIST and CelebA ($32\times32$) datasets. Additional generated samples from both AFFJORD and FFJORD can be found in Appendix \ref{appendixD}.

\section{Limitations and Future Work}

\textbf{Number of Function Evaluations (NFE)}. As originally reported in \cite{ffjord_chen2019}, the number of function evaluations is one of the main bottlenecks of Neural ODE-based models. 
\begin{figure}
\centering
\begin{tabular}{l}
\includegraphics[scale=0.485]{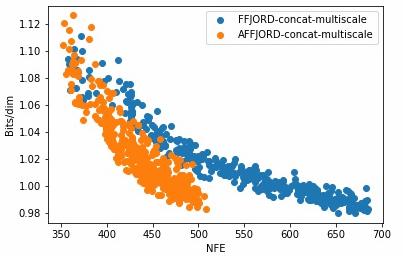}
\end{tabular}
\caption{Number of function evaluations of FFJORD and AFFJORD when the 'concatenate' architecture is used.}
\label{steps_loss_concat}
\end{figure}
Interestingly, if the concatenation architecture is used in AFFJORD, that is, we only replace the concatenated time channel in FFJORD with the augmented channel in AFFJORD, then the number of function evaluations decreases significantly. This aspect is illustrated in Figure \ref{steps_loss_concat}. However, the hypernet architecture of AFFJORD is prone to the issue of the number of function evaluations. Indeed, forward passes in AFFJORD-hypernet can be challenging, as the learnt vector field becomes too stiff with an increasing number of integration steps.
\newline
\newline
\textbf{Addition of Self-Attention } 
\cite{ho_flowplus} describe three modeling inefficiencies in prior work on flow models. One such factor is the lack of expressive power of convolutional layers used in normalizing flow models. Considering the performance improvements demonstrated in \cite{ho_flowplus}, in the future we intend to test the improvement in performance brought by the addition of self-attention in AFFJORD.
\newline
\newline
\textbf{Data Dependent Augmented Dimensions} 
As described in Section \ref{aug_neural_ODE_flows}, several simplifications are made in the architecture of the general augmented neural ODE flows, in order to ensure immediate bijectivity and reduce the computational complexity. However, other possible architectures exist, where for example $\boldsymbol{z}^*(0)$ is dependent on $\boldsymbol{z}(0)$, and $g(\boldsymbol{z}^*(t))=-\boldsymbol{z}^*(t)$. This would ensure that the augmented dimensions converge to zero, providing bijectivity. Since all augmented dimensions would be different during training, this would imply that the data is lifted to a higher plane, enabling richer transformations. Finding an approximation of the loss in Equation \ref{pre1} remains a challenge for the future however.
\newline
\newline
\textbf{Jacobian Regularization} 
Theoretically speaking, all performance enhancing modifications that can be applied to FFJORD are also applicable to AFFJORD. Such a modification which reduces the training time of FFJORD is presented in \cite{train_ode},  where both the vector field and its Jacobian are regularized. Thus, an interesting research direction in the future would be to test how the performance of AFFJORD is affected by such amendments.

\section{Conclusion} 
We have presented the generalization of the total derivative decomposition in the continuous sense as well as the continuous generalization of the chain rule, to which we refer as the cable rule. The cable rule is analogous to the forward sensitivity of ODEs in the sense that, it gives the dynamics of the Jacobian of the state with respect to the initial conditions, whereas forward sensitivity gives the dynamics of the Jacobian of the state with respect to the parameters of the flow. Motivated by this contribution, we propose a new type of continuous normalizing flow, namely Augmented FFJORD (AFFJORD), which outperforms the CNF state-of-art-approach, FFJORD, in the experiments we conducted on the task of density estimation on both 2D toy data, and on high dimensional datasets such as MNIST, CIFAR-10 and CelebA ($32\times32$). 

\newpage

\newpage
\appendix

\section{Cable Rule: The Continuous Generalization of the Chain Rule}\label{appendixA}
We will first assume that $z$ is one dimensional. 
If $z_i=f_i(z_{i-1})$ for $i \in \{1,...,n\}$  then by the chain rule we have
\begin{equation}\label{a1}
    \frac{d z_n}{d z_0}=\frac{\partial z_n}{\partial z_{n-1}}\frac{d z_{n-1}}{d z_{0}}=\frac{\partial z_n}{\partial z_{n-1}}\frac{\partial z_{n-1}}{\partial z_{n-2}}...\frac{\partial  z_1}{\partial z_0}=\frac{\partial f_n(z_{n-1})}{\partial z_{n-1}}\frac{\partial f_{n-1}(z_{n-2})}{\partial z_{n-2}}...\frac{\partial f_1 (z_0)}{\partial z_0}
\end{equation}
Now, if we assume that $z_i=z(t_i)$  is transformed more gradually as in $z_{i+1}=z_i+\epsilon f(z_i)$, and that $t_{i+1}=t_i+\epsilon$, we get that 
\begin{equation}\label{a2}
    \frac{d z_n}{d z_0}=\frac{\partial z_n}{\partial z_{n-1}}\frac{\partial z_{n-1}}{\partial z_{n-2}}...\frac{\partial z_1}{\partial z_0}=(I+\epsilon \frac{\partial f(z_{n-1})}{\partial z_{n-1}})(I+\epsilon \frac{\partial f(z_{n-2})}{\partial z_{n-2}})...(I+\epsilon \frac{\partial f(z_0)}{\partial z_0})
\end{equation}
We see that $z(t)=z(0) +\int_0^t f(z(\tau)) d\tau$ is the limit of the previous iterative definition $z_{i+1}=z_i+\epsilon f(z_i, \boldsymbol{\theta}(t_i))$, when $\epsilon \rightarrow 0$. For simplicity, we have written $f(z_i)=f(z_i,\boldsymbol{\theta},t_i)$. If we decide to expand equation \ref{a2}, we obtain 
\begin{equation}\label{a3}
    \frac{d z_n}{d z_0}=I+\sum_{i=0}^{n-1}\frac{\partial f(z_i)}{\partial z_i}\epsilon+\sum_{i=0}^{n-1}\sum_{j<i}\frac{\partial f(z_i)}{\partial z_i}\frac{\partial f(z_j)}{\partial z_j}\epsilon^2+...+\frac{\partial f(z_0)}{\partial z_0}...\frac{\partial f(z_n)}{\partial z_n}\epsilon^n
\end{equation}
\begin{equation}\label{a4}
    =I+S_1^n+S_2^n+...+S_{n+1}^n,
\end{equation}
where 
\begin{equation}\label{a5}
    S_2^n=\sum_{i=0}^{n-1}\sum_{j<i}\frac{\partial f(z_i)}{\partial z_i}\frac{\partial f(z_j)}{\partial z_j}\epsilon^2,\
    S_3^n=\sum_{i=0}^{n-1}\sum_{j<i}\sum_{k<j}\frac{\partial f(z_i)}{\partial z_i}\frac{\partial f(z_j)}{\partial z_j}\frac{\partial f(z_k)}{\partial z_k}\epsilon^3,...
\end{equation}
\begin{figure}[ht] 
  
  \begin{minipage}[b]{0.6\linewidth}
    \includegraphics[width=.68\linewidth]{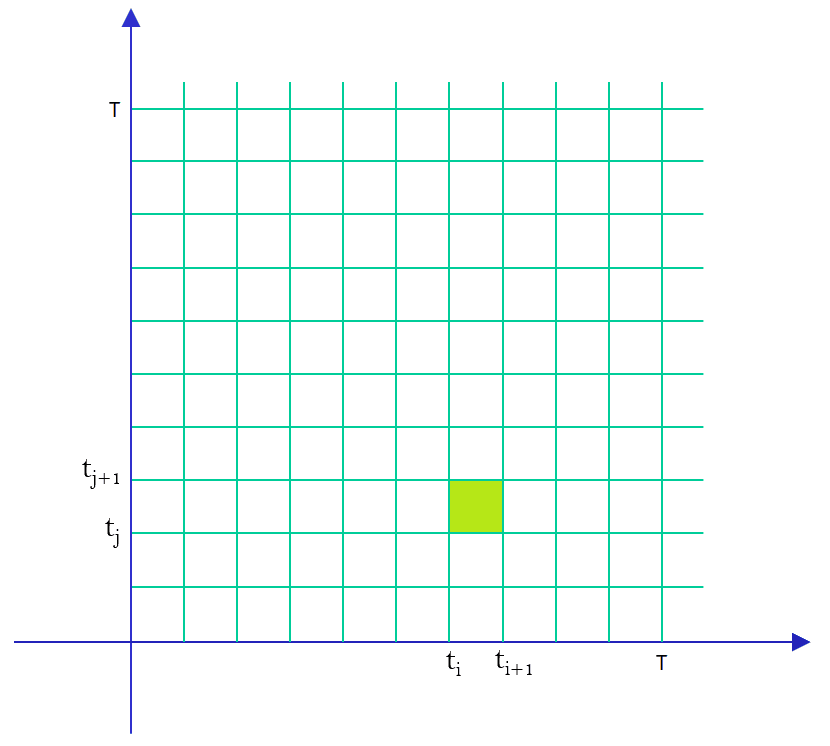} 
    \caption{Rectangle $(i,j)$ in discretized $[0,T]\times[0,T]$}
    \label{fig7} 
  \end{minipage}
  \begin{minipage}[b]{0.6\linewidth}
    \includegraphics[width=.68\linewidth]{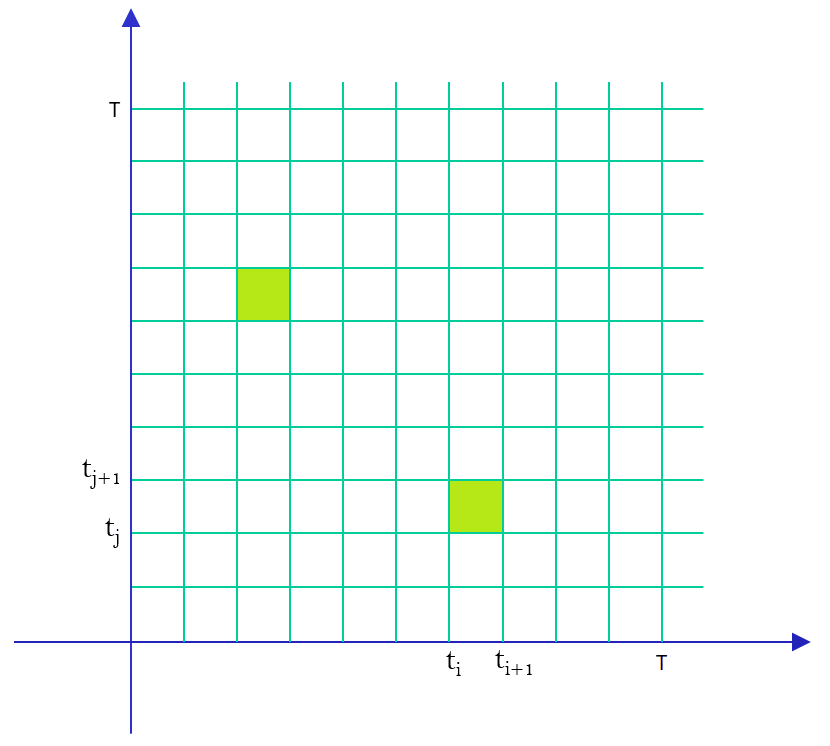} 
    \caption{Symmetry of $g_2^n$ \ \ \ \ \ \ \ \ \ \ \ \ \ \ \ \ }
    \label{fig8} 
  \end{minipage} 
  \begin{minipage}[b]{0.6\linewidth}
    \includegraphics[width=.68\linewidth]{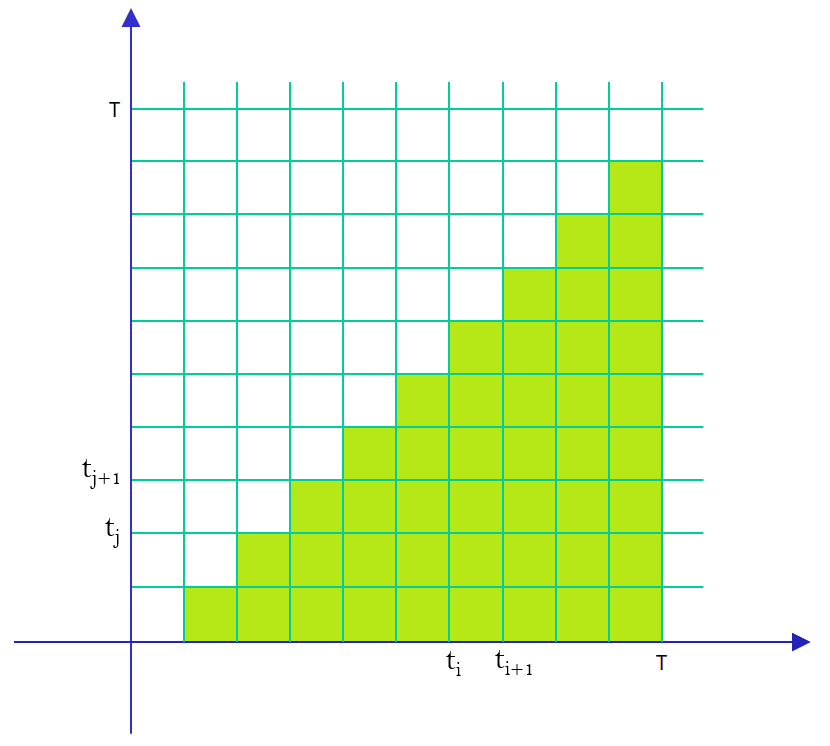} 
    \caption{Rectangles participating in $S_2^n$  \ \ \ \ \ \ \ \ \ \ \ \ \ } 
    \label{fig9}
  \end{minipage}
  \begin{minipage}[b]{0.6\linewidth}
    \includegraphics[width=.68\linewidth]{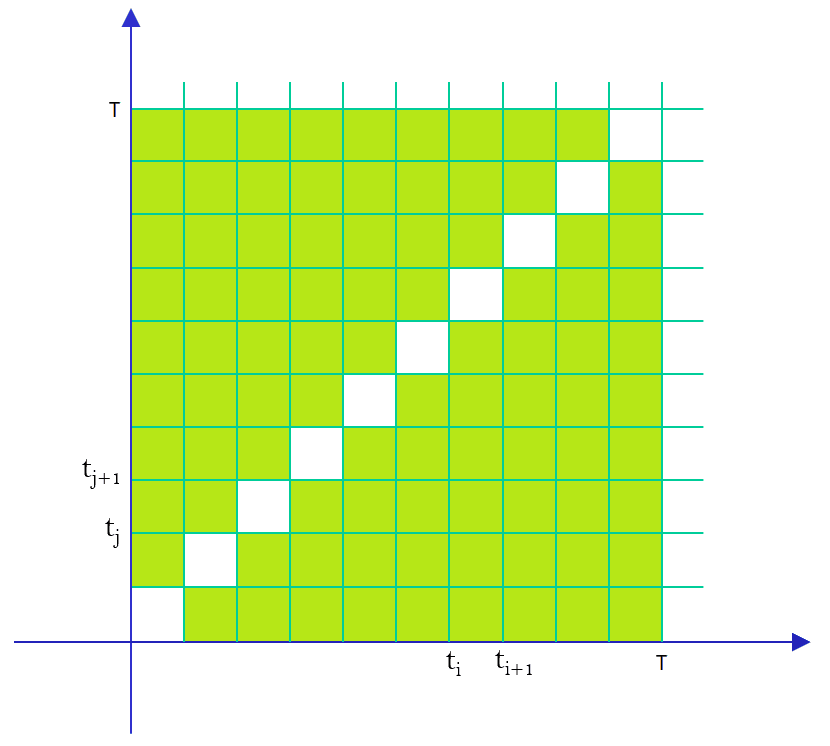} 
    \caption{Rectangles in $\bar{S}_2^n$ \ \ \ \ \ \ \ \ \ \ \ \ \ \ \ \ }
    \label{fig10} 

  \end{minipage} 
\end{figure}
We will focus on the sum $S_2^n$ for a moment. Let us define
\begin{equation}\label{a6}
    g_2^n(x,y)=\frac{\partial f(z_{t_i})}{\partial z_{t_i}}\frac{\partial f(z_{t_j})}{\partial z_{t_j}}, \text{ for } x\in [t_i,t_{i+1}], y\in [{t_j},{t_{j+1}}].
\end{equation}
It is clear that $g_2^n$ is the discretization of 
\begin{equation}\label{a7}
    g_2(t,u)=\frac{\partial f(z_{t})}{\partial z_{t}}\frac{\partial f(z_u)}{\partial z_u}, \text{ for } t\in [0,T], u\in [0,T].
\end{equation}
We can notice that $S_2^n=\sum_{i=0}^{n-1}\sum_{j<i}\ g_2^n(t_i,t_j)\epsilon^2$ and that $g_2^n(t_i,t_j)=g_2^n(t_j,t_i)$ is symmetric, but in $S_2^n$, only takes values in the rectangles under the diagonal as illustrated in Figure \ref{fig9}. If we decide to expand the sum $S_2^n$ on the rectangles above the diagonal as in Figure \ref{fig10} and denote it as $\bar{S}_2^n=\sum_{i=0}^{n-1}\sum_{j\neq i}\ g_2^n(t_i,t_j)\epsilon^2$, then due to the symmetry of $g_2^n(t_i,t_j)=g_2^n(t_j,t_i)$, we deduce that $S_2^n=\frac{1}{2} \bar{S}_2^n$. The only rectangles missing in $\bar{S}_2^n$, regarding the discretization of $[0,T]\times [0,T]$, are the ones corresponding to the cases when $i=j$, which are the rectangles in the diagonal. It is important to emphasize here that $S_2^n$ is the sum of $C_n^2$ terms, while $\bar{S}_2^n$ is made out of $V_n^2=2!C_n^2$ terms. This is especially apparent when we notice the discretization of $[0,T]\times [0,T]$ is composed of $n^2$ rectangles, while the diagonal is composed of $n$ rectangles, hence $\bar{S}_2^n$ is the sum of $n^2-n=V_n^2$ terms. The  ratio of the collective mass of the rectangles in the diagonal with respect to the entire $[0,T]\times [0,T]$ goes to zero as $n\rightarrow\infty$, hence we conclude that 
\begin{equation}\label{a8}
    \bar{S}_2^n=\sum_{i=0}^{n-1}\sum_{j\neq i}\frac{\partial f(z_i)}{\partial z_i}\frac{\partial f(z_j)}{\partial z_j}\epsilon^2 \rightarrow \int_{[0,T]\times[0,T]} g_2(t,u) d(t, u)=(\int_0^T \frac{\partial f(z_t)}{\partial z_t} dt)^2,
\end{equation}
thus,
\begin{equation}\label{a9}
    S_2^n=\frac{1}{2!}\bar{S}_2^n\rightarrow \frac{1}{2!}(\int_0^T \frac{\partial f(z_t)}{\partial z_t} dt)^2.
\end{equation}
Similarly for \begin{equation*}\label{doubledollar} S_3^n=\sum_{i=0}^{n-1}\sum_{j<i}\sum_{k<j}\frac{\partial f(z_i)}{\partial z_i}\frac{\partial f(z_j)}{\partial z_j}\frac{\partial f(z_k)}{\partial z_k}\epsilon^3,\end{equation*} we can define 
\begin{equation}\label{a10}
    g_3^n(x,y,z)=\frac{\partial f(z_{t_i})}{\partial z_{t_i}}\frac{\partial f(z_{t_j})}{\partial z_{t_j}}\frac{\partial f(z_{t_k})}{\partial z_{t_k}}, \text{ for } x\in [t_i,t_{i+1}], y\in [{t_j},{t_{j+1}}], z\in [{t_k},{t_{k+1}}],
\end{equation}
and 
\begin{equation}\label{a10a}
    g_3(t,u,v)=\frac{\partial f(z_{t})}{\partial z_{t}}\frac{\partial f(z_u)}{\partial z_u}\frac{\partial f(z_v)}{\partial z_v}, \text{ for } t\in [0,T], u\in [0,T], v\in [0,T].
\end{equation} In this case:
\begin{equation*}
    g_3^n(x,y,z)=g_3^n(x,y,z)=g_3^n(x,z,y)=g_3^n(y,x,z)=
\end{equation*}
\begin{equation}\label{a11}
   =g_3^n(y,z,x)=g_3^n(z,x,y)=g_3^n(z,y,x),
\end{equation} 
where each equality corresponds to one of the $6=3!$ permutations of $(x,y,z)$. As before we can expand the domain of $S_3^n$, by adding the rectangles in the discretization of $[0,T]\times [0,T]\times [0,T]$ such that $i$ might be smaller than $j$, as well as that $j$ could be smaller than $k$. We denote this expanded sum as $\bar{S}_3^n$, and by the symmetry of $g_3^n$, we notice that $S_3^n=\frac{1}{3!}\bar{S}_3^n$. This implies that the rectangles participating in $\bar{S}_3^n$, now cover most of $[0,T]\times [0,T]\times [0,T]$, where the only exceptions are the ones when $i=j$ or $j=k$ (or both). The number of rectangles participating in $\bar{S}_3^n$ is $V_n^3=3!C_n^3=n^3-3n^2+2n$, and since the number of all rectangles in $[0,T]\times [0,T]\times [0,T]$ is $n^3$, this implies that $3n^2-2n$ rectangles are missing in sum $\bar{S}_3^n$ from the cases when $i=j$ or $j=k$ (or both). The  ratio of the collective mass of such rectangles with respect to the entire $[0,T]\times [0,T]\times [0,T]$ goes to zero as $n\rightarrow\infty$, hence as before:
\begin{equation}\label{a12}
    \bar{S}_3^n=\sum_{i=0}^{n-1}\sum_{j\neq i}\sum_{i\neq k\neq j}\frac{\partial f(z_i)}{\partial z_i}\frac{\partial f(z_j)}{\partial z_j}\frac{\partial f(z_k)}{\partial z_k}\epsilon^3 \rightarrow \int_{[0,T]\times[0,T]\times[0,T]} g_3(t,u,v) d(t, u, v)
\end{equation}
\begin{equation}\label{a13}
            =(\int_0^T \frac{\partial f(z_t)}{\partial z_t} dt)^3,
\end{equation}
implying 
\begin{equation}\label{a14}
    S_3^n=\frac{1}{3!}\bar{S}_3^n \rightarrow \frac{1}{3!} (\int_0^T \frac{\partial f(z_t)}{\partial z_t} dt)^3.
\end{equation}
In a similar fashion we can prove that $S_k^n\rightarrow \frac{1}{k!} (\int_0^T \frac{\partial f(z_t)}{\partial z_t} dt)^k$.
Thus we conclude that:
\begin{equation}\label{a15}
    \frac{d z(T)}{d z(0)}=\frac{1}{0!}I+\frac{1}{1!}\int_{0}^{T}\frac{\partial f(z_t)}{\partial z_t}dt+\frac{1}{2!}(\int_{0}^{T}\frac{\partial f(z_t)}{\partial z_t}dt)^2+\frac{1}{3!}(\int_{0}^{T}\frac{\partial f(z_t)}{\partial z_t}dt)^3+...
\end{equation}
\begin{equation}\label{a16}
    \frac{d z(T)}{d z(0)}=e^{\int_{0}^{T}\frac{\partial f(z_t)}{\partial z_t}dt}.
\end{equation}
Unfortunately, this result does not hold when the dimensionality of $\boldsymbol{z}(t)$ is larger than one. Indeed, in this case, $\frac{\partial f(\boldsymbol{z}_{t_i})}{\partial \boldsymbol{z}_{t_i}}$ is a matrix, hence $g_2^n(x,y)=\frac{\partial f(\boldsymbol{z}_{t_i})}{\partial \boldsymbol{z}_{t_i}}\frac{\partial f(\boldsymbol{z}_{t_j})}{\partial \boldsymbol{z}_{t_j}}$ is not necessarily symmetric, as the commutator $[\frac{\partial f(\boldsymbol{z}_{t_i})}{\partial \boldsymbol{z}_{t_i}},\frac{\partial f(\boldsymbol{z}_{t_j})}{\partial \boldsymbol{z}_{t_j}}]$ is not necessarily zero. For this reason, inspired by the previous result we try a different approach. Indeed, we can see from Equation \ref{a16} that $\frac{d z(T)}{d z(0)}$ is the solution of the following ODE:

\begin{equation}\label{a'1}
\frac{d(\frac{d z(t)}{d z(0)})}{dt}=\frac{\partial f(z(t))}{\partial z(t)}\frac{d z(t)}{d z(0)}
\end{equation}

as the initial condition $\frac{d z(t=0)}{d z(0)}=I$. Hence we wish to prove that $\frac{d\boldsymbol{z}(t)}{d\boldsymbol{z}(0)}$ satisfies the same ODE in higher dimensions as well.

We notice that we can write:
\begin{equation}\label{a17}
\boldsymbol{z}(t)=\boldsymbol{z}(0) +\int_0^t f(\boldsymbol{z}(\tau)) d\tau=g(\boldsymbol{z}(0),t),
\end{equation}
therefore,
\begin{equation*}
\frac{d(\frac{d\boldsymbol{z}(t)}{d\boldsymbol{z}(0)})}{dt}=\frac{\partial^2 g(\boldsymbol{z}(0),t)}{\partial t \partial \boldsymbol{z}(0)}=\frac{\partial^2 g(\boldsymbol{z}(0),t)}{\partial \boldsymbol{z}(0) \partial t}=
\end{equation*}
\begin{equation}\label{a18}
=\frac{d(\frac{dg(\boldsymbol{z}(0),t)}{dt})}{d\boldsymbol{z}(0)}=\frac{df(\boldsymbol{z}(t))}{d\boldsymbol{z}(0)}=\frac{df(\boldsymbol{z}(t))}{d\boldsymbol{z}(t)}\frac{d\boldsymbol{z}(t)}{d\boldsymbol{z}(0)}.
\end{equation}
We pause for a moment, in order to  highlight the similarity of expression \ref{a18} and the forward sensitivity:

\begin{equation}\label{a'2}
\frac{d(\frac{d\boldsymbol{z}(t)}{d\boldsymbol{\theta}})}{dt}=\frac{df(\boldsymbol{z}(t),t,\boldsymbol{\theta})}{d\boldsymbol{\theta}}=\frac{\partial f(\boldsymbol{z}(t),t,\boldsymbol{\theta})}{\partial \boldsymbol{z}(t)}\frac{d  \boldsymbol{z}(t)}{d \boldsymbol{\theta}}+\frac{\partial f(\boldsymbol{z}(t),t,\boldsymbol{\theta})}{\partial \boldsymbol{\theta}}.
\end{equation}

Now from Expression \ref{a18}, we infer that $\frac{d\boldsymbol{z}(t)}{d\boldsymbol{z}(0)}$ is the solution of the following linear ODE:
\begin{equation}\label{a19}
\frac{d(\frac{d\boldsymbol{z}(t)}{d\boldsymbol{z}(0)})}{dt}=\frac{\partial f(\boldsymbol{z}(t))}{\partial \boldsymbol{z}(t)}\frac{d\boldsymbol{z}(t)}{d\boldsymbol{z}(0)}.
\end{equation}
If we write $\boldsymbol{Y}(t)=\frac{d\boldsymbol{z}(t)}{d\boldsymbol{z}(0)}$, then the equation above becomes:
\begin{equation}\label{a20}
\frac{d\boldsymbol{Y}(t)}{dt}=\frac{\partial f(\boldsymbol{z}(t))}{\partial \boldsymbol{z}(t)}\boldsymbol{Y}(t).
\end{equation}
The general solution of first-order homogeneous linear ODEs is given in \cite{magnus_1954,Blanes_2009}, and in our case can be written as follows:
\begin{equation}\label{a21}
    \frac{d \boldsymbol{\boldsymbol{z}}(t)}{d \boldsymbol{\boldsymbol{z}}(0)}=e^{\boldsymbol{\Omega}(t)}\frac{d\boldsymbol{z}(0)}{d\boldsymbol{z}(0)}=e^{\boldsymbol{\Omega}(t)}, \text{ for\ \  } \boldsymbol{\Omega}(t)=\sum_{k=1}^{\infty}\boldsymbol{\Omega}_k(t),
\end{equation} 
where
\begin{align}\label{a22}
  \boldsymbol{\Omega}_1(t) &= \int_0^t \boldsymbol{A}(t_1)\,dt_1, \\
  \boldsymbol{\Omega}_2(t) &= \frac{1}{2} \int_0^t dt_1 \int_0^{t_1} dt_2 \, [\boldsymbol{A}(t_1), \boldsymbol{A}(t_2)], \\
  \boldsymbol{\Omega}_3(t) &= \ 
 \end{align} 
 \begin{equation}
     = \frac{1}{6} \int_0^t dt_1 \int_0^{t_1} dt_2 \int_0^{t_2} dt_3 \,
                 \Bigl(\big[\boldsymbol{A}(t_1), [\boldsymbol{A}(t_2), \boldsymbol{A}(t_3)]\big] + \big[\boldsymbol{A}(t_3), [\boldsymbol{A}(t_2), \boldsymbol{A}(t_1)]\big]\Bigr),
 \end{equation} and so on for $k>3$, where:
\begin{align}\label{a23}
 \boldsymbol{A}(t)=\frac{\partial f(\boldsymbol{z}(t),t,\boldsymbol{\theta})}{\partial \boldsymbol{\boldsymbol{z}}(t)}.
 \end{align}
We notice that if $z(t)$ is one dimensional, then this result agrees with the one in the previous approach. 
To conclude, we have proven the following theorem:
\begin{theorem}\label{otheorem1}
Let $\frac{d\boldsymbol{z}(t)}{dt}= f(\boldsymbol{z}(t),\boldsymbol{\theta},t)$, where $f$ is continuous in $t$ and Lipschitz continuous in $\boldsymbol{z}(t)$. Then the following holds: 
\begin{equation}\label{a27}
    \frac{d \boldsymbol{z}(T)}{d \boldsymbol{z}(0)}=e^{\int_{0}^{T}\frac{\partial f(\boldsymbol{z}_t)}{\partial \boldsymbol{z}_t}dt+\boldsymbol{\Omega}_2(T)+....},
\end{equation} where $\boldsymbol{\Omega}_{k>1}(t)$ are the terms of the Magnus series.
\end{theorem}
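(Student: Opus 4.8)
The plan is to reduce the claim to a first-order homogeneous linear matrix ODE for the Jacobian and then solve that ODE with the Magnus expansion. Setting $\boldsymbol{Y}(t)=\frac{d\boldsymbol{z}(t)}{d\boldsymbol{z}(0)}$ and $\boldsymbol{A}(t)=\frac{\partial f(\boldsymbol{z}(t),t,\boldsymbol{\theta})}{\partial \boldsymbol{z}(t)}$, the entire statement follows once we establish that $\boldsymbol{Y}$ satisfies $\dot{\boldsymbol{Y}}(t)=\boldsymbol{A}(t)\boldsymbol{Y}(t)$ with $\boldsymbol{Y}(0)=I$: the Magnus formula then returns $\boldsymbol{Y}(T)=e^{\boldsymbol{\Omega}(T)}$ with $\boldsymbol{\Omega}(T)=\int_0^T \boldsymbol{A}(t)\,dt+\boldsymbol{\Omega}_2(T)+\cdots$, which is precisely Equation \ref{a27}.

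First I would pin down existence and the differentiability of the flow that the rest of the argument silently uses. The Lipschitz and continuity hypotheses on $f$ give, via Picard--Lindel\"of, a unique trajectory $\boldsymbol{z}(t)=g(\boldsymbol{z}(0),t)$; the same regularity underlies the existence of $\boldsymbol{Y}(t)$ and the continuity of the mixed partials of $g$ needed below. I would treat this as a standing hypothesis rather than re-derive it from scratch.

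Second, I would derive the variational equation as in Equations \ref{a17}--\ref{a19}. Writing $\boldsymbol{z}(t)=g(\boldsymbol{z}(0),t)$, the key move is to exchange the order of differentiation, $\frac{\partial^2 g}{\partial t\,\partial \boldsymbol{z}(0)}=\frac{\partial^2 g}{\partial \boldsymbol{z}(0)\,\partial t}$; since $\frac{dg}{dt}=f(\boldsymbol{z}(t))$, the chain rule then yields $\dot{\boldsymbol{Y}}(t)=\frac{\partial f(\boldsymbol{z}(t))}{\partial \boldsymbol{z}(t)}\boldsymbol{Y}(t)$ (equivalently, one differentiates the integral equation $\boldsymbol{z}(t)=\boldsymbol{z}(0)+\int_0^t f(\boldsymbol{z}(\tau))\,d\tau$ in $\boldsymbol{z}(0)$ and then in $t$). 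Finally I would quote the Magnus solution of this linear system, Equations \ref{a21}--\ref{a22}, together with the consistency check that in one dimension all commutators vanish, $\boldsymbol{\Omega}_{k>1}\equiv 0$, so the formula collapses to $e^{\int_0^T \partial f/\partial z\,dt}$, recovering the combinatorial computation of Equations \ref{a1}--\ref{a16}.

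The hard part is justifying the interchange of derivatives in the second step and the convergence of the Magnus series in the third. The swap $\partial_t\partial_{\boldsymbol{z}(0)}=\partial_{\boldsymbol{z}(0)}\partial_t$ requires continuity of the mixed second partials of $g$ (Clairaut/Schwarz), which in turn needs $f$ to be $C^1$ in $\boldsymbol{z}$ rather than merely Lipschitz; I would make this smoothness explicit. Moreover, the matrix-exponential representation via the \emph{full} Magnus series is only guaranteed under a size condition on $\int_0^T \lVert \boldsymbol{A}(t)\rVert\,dt$, so a fully rigorous statement either restricts $T$ to a convergence window or appeals to the globally well-defined fundamental matrix solution of $\dot{\boldsymbol{Y}}=\boldsymbol{A}\boldsymbol{Y}$ and interprets the series as its asymptotic/analytic representation.
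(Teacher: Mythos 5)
Your proposal follows essentially the same route as the paper's own derivation in Appendix~\ref{appendixA}: write $\boldsymbol{z}(t)=g(\boldsymbol{z}(0),t)$, exchange the mixed partials to obtain the variational equation $\dot{\boldsymbol{Y}}(t)=\frac{\partial f}{\partial \boldsymbol{z}(t)}\boldsymbol{Y}(t)$ with $\boldsymbol{Y}(0)=I$, and solve it via the Magnus expansion. Your two caveats --- that the Clairaut interchange really needs $f$ to be $C^1$ in $\boldsymbol{z}$ rather than merely Lipschitz, and that the full Magnus series is only guaranteed to converge under a bound on $\int_0^T\lVert\boldsymbol{A}(t)\rVert\,dt$ --- are points the paper passes over silently, so making them explicit strengthens rather than departs from the argument.
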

\begin{proof}
Since $f$ is continuous in $t$ and Lipschitz continuous in $\boldsymbol{z}$ then due to Picard–Lindelöf theorem, $\boldsymbol{z}(T)$ exists and is unique. Furthermore, since $f$ is  Lipschitz continuous in $\boldsymbol{z}(t)$, then $\frac{\partial f(\boldsymbol{z}_t)}{\partial \boldsymbol{z}_t}$ exists almost everywhere. Based on the previous derivations we reach the desired conclusion.
\end{proof}

\section{The Continuous Generalization of the Total Derivative Decomposition}\label{appendixB}
In Appendix \ref{appendixA}, we assumed that $\boldsymbol{z}(t)=\boldsymbol{z}(0) +\int_0^t f(\boldsymbol{z}(\tau),\boldsymbol{\theta},\tau) d\tau$, which was the the limit of $\epsilon \rightarrow 0$ of the previous iterative definition of $\boldsymbol{z}_{i+1}=\boldsymbol{z}_i+\epsilon f(\boldsymbol{z}_i,\boldsymbol{\theta},t_i)$. Now, we assume that the set of parameters in $f$ is different at each discrete time, that is $\boldsymbol{z}_{i+1}=\boldsymbol{z}_i+\epsilon f_i(\boldsymbol{z}_i, \boldsymbol{\theta}_i, t_i)$, for independent sets $\boldsymbol{\theta}_i$.
First, we notice that \begin{equation*}\label{doubledollar2}\boldsymbol{z}_n=\boldsymbol{z}_n(\boldsymbol{z}(t_0), \boldsymbol{\theta}(t_0),\boldsymbol{\theta}(t_1),...,\boldsymbol{\theta}(t_{n-1}))=\boldsymbol{z}_n(\boldsymbol{z}(t_1), \boldsymbol{\theta}(t_1),\boldsymbol{\theta}(t_2),...,\boldsymbol{\theta}(t_{n-1}))=\end{equation*} 
\begin{equation*}...\boldsymbol{z}_n(\boldsymbol{z}(t_{k+1}), \boldsymbol{\theta}(t_{k+1}),\boldsymbol{\theta}(t_{k+2}),...,\boldsymbol{\theta}(t_{n-1}))=...\end{equation*}
\begin{equation*}...=\boldsymbol{z}_n(\boldsymbol{z}(t_{n-1}), \boldsymbol{\theta}(t_{n-1}))=\boldsymbol{z}(t_n=T-\epsilon).\end{equation*}
 This can be seen from Figure \ref{conttotderivimg}.
\begin{figure}
\center
\includegraphics[width=0.5\textwidth]{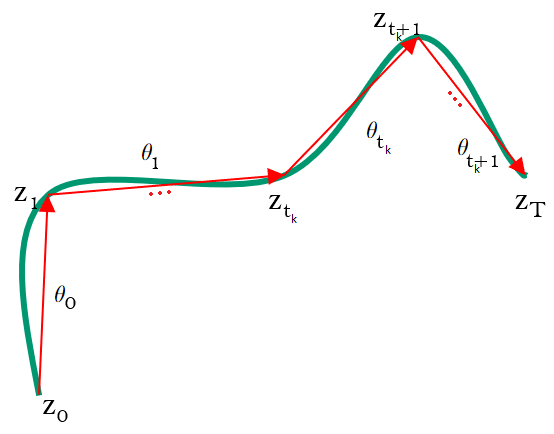}
\caption{The dependencies in the discretisation of $\frac{\boldsymbol{z}(t)}{t}=f(\boldsymbol{z}(t),\boldsymbol{\theta}(t),t)$. Variable $\boldsymbol{z}_{t_{k+1}}$ completely absorbs the contribution of $\boldsymbol{\theta}_{t_k}$ to $\boldsymbol{z}_T$, hence $\boldsymbol{z}_T$ is a function of $\boldsymbol{z}_{t_{k+1}}$ and the following sets of parameters $\boldsymbol{\theta}_{t_{k+1}},\boldsymbol{\theta}_{t_{k+2}},...,\boldsymbol{\theta}_{t_n}$.}
\label{conttotderivimg}
\end{figure}
Thus
\begin{equation}\label{d1}
      \frac{d \boldsymbol{z}_n}{d \boldsymbol{\theta}(t_k)}=\frac{d \boldsymbol{z}_n( \boldsymbol{z}(t_k+\epsilon), \boldsymbol{\theta}(t_k+\epsilon),\boldsymbol{\theta}(t_{k+2}),...,\boldsymbol{\theta}(t_n))}{d \boldsymbol{\theta}(t_k)}
\end{equation}

\begin{equation}\label{d2}
      \frac{d \boldsymbol{z}_n}{d \boldsymbol{\theta}(t_k)}=\frac{\partial \boldsymbol{z}_n}{\partial \boldsymbol{z}(t_k+\epsilon)}\frac{d \boldsymbol{z}(t_k+\epsilon)}{d \boldsymbol{\theta}(t_k)}+0+0+...+0=\frac{\partial \boldsymbol{z}_n}{\partial \boldsymbol{z}(t_k+\epsilon)}\frac{\partial \boldsymbol{z}(t_k+\epsilon)}{\partial \boldsymbol{\theta}(t_k)}.
\end{equation}
We now focus on analysing $\frac{\partial \boldsymbol{z}_n}{\partial \boldsymbol{z}(t_k+\epsilon)}$ and $\frac{\partial \boldsymbol{z}(t_k+\epsilon)}{\partial \boldsymbol{\theta}(t_k)}$.
First we notice that from:
\begin{equation}\label{d3a}
      \frac{\partial \boldsymbol{z}_n}{\partial \boldsymbol{z}(t_k)}=\frac{\partial \boldsymbol{z}_n}{\partial \boldsymbol{z}(t_k+\epsilon)}\frac{\partial \boldsymbol{z}(t_k+\epsilon)}{\partial \boldsymbol{z}(t_k)} = \frac{\partial \boldsymbol{z}_n}{\partial \boldsymbol{z}(t_k+\epsilon)}(I+\epsilon\frac{\partial f(\boldsymbol{z}(t_k),\boldsymbol{\theta}(t_k)) }{\partial \boldsymbol{z}(t_k)}+O(\epsilon^2))
\end{equation}
we get
\begin{equation}\label{d3b}
      \frac{\partial \boldsymbol{z}_n}{\partial \boldsymbol{z}(t_k+\epsilon)}= \frac{\partial \boldsymbol{z}_n}{\partial \boldsymbol{z}(t_k)}-\epsilon\frac{\partial \boldsymbol{z}_n}{\partial \boldsymbol{z}(t_k+\epsilon)}\frac{\partial f(\boldsymbol{z}(t_k),\boldsymbol{\theta}(t_k)) }{\partial \boldsymbol{z}(t_k)}+O(\epsilon^2).
\end{equation}
Regarding $\frac{\partial \boldsymbol{z}(t_k+\epsilon)}{\partial \boldsymbol{\theta}(t_k)}$ the following holds:
\begin{equation}\label{d4}
      \frac{\partial \boldsymbol{z}(t_k+\epsilon)}{\partial \boldsymbol{\theta}(t_k)}=\frac{\partial (\boldsymbol{z}(t_k)+f(\boldsymbol{z}(t_k),\boldsymbol{\theta} (t_k))\epsilon+O(\epsilon^2))}{\partial \boldsymbol{\theta}(t_k)}=0+\frac{\partial f(\boldsymbol{z}(t_k),\boldsymbol{\theta} (t_k))}{\partial \boldsymbol{\theta}(t_k)}\epsilon+O(\epsilon^2).
\end{equation}
After combining both Equation \ref{d3b} and Equation \ref{d4} in expression \ref{d2}, we deduce that:
\begin{equation*}
      \frac{d \boldsymbol{z}_n}{d \boldsymbol{\theta}(t_k)}=
\end{equation*}
\begin{equation}\label{d5}
      \big(\frac{\partial \boldsymbol{z}_n}{\partial \boldsymbol{z}(t_k)}-\epsilon\frac{\partial \boldsymbol{z}_n}{\partial \boldsymbol{z}(t_k+\epsilon)}\frac{\partial f(\boldsymbol{z}(t_k),\boldsymbol{\theta}(t_k)) }{\partial \boldsymbol{z}(t_k)}+O(\epsilon^2)\big) \big(\frac{\partial f(\boldsymbol{z}(t_k),\boldsymbol{\theta} (t_k))}{\partial \boldsymbol{\theta}(t_k)}\epsilon+O(\epsilon^2)\big),
\end{equation}
thus
\begin{equation}\label{d6}
      \frac{\partial \boldsymbol{z}_n}{\partial \boldsymbol{\theta}(t_k)}=\frac{d \boldsymbol{z}_n}{d \boldsymbol{\theta}(t_k)}=\frac{\partial \boldsymbol{z}_n}{\partial \boldsymbol{z}(t_k)} \frac{\partial f(\boldsymbol{z}(t_k),\boldsymbol{\theta} (t_k))}{\partial \boldsymbol{\theta}(t_k)}\epsilon+ O(\epsilon^2)
\end{equation}
Now assuming that $\boldsymbol{\theta}(t_k)=g(t_k,\boldsymbol{\theta})$, we can write the total derivative of $\boldsymbol{z}_n$ with respect to parameters $\boldsymbol{\theta}$ :
\begin{equation}\label{d7}
      \frac{d\boldsymbol{z}_n}{d \boldsymbol{\theta}}=\sum_{k=0}^n \frac{\partial \boldsymbol{z}_n}{\partial \boldsymbol{\theta}(t_k)}\frac{d \boldsymbol{\theta}(t_k)}{d \boldsymbol{\theta}}=\sum_{k=0}^n \frac{\partial \boldsymbol{z}_n}{\partial \boldsymbol{z}(t_k)} \frac{\partial f(\boldsymbol{z}(t_k),\boldsymbol{\theta} (t_k))}{\partial \boldsymbol{\theta}(t_k)}\frac{d \boldsymbol{\theta}(t_k)}{d \boldsymbol{\theta}}\epsilon+ O(\epsilon^2)
\end{equation}
hence taking $\epsilon \rightarrow 0$, we conclude that
\begin{equation}\label{d8}
      \frac{d\boldsymbol{z}(T)}{d \boldsymbol{\theta}}=\int_0^T \frac{\partial \boldsymbol{z}(T)}{\partial \boldsymbol{z}(t)} \frac{\partial f(\boldsymbol{z}(t),\boldsymbol{\theta} (t))}{\partial \boldsymbol{\theta}(t)}\frac{\partial \boldsymbol{\theta}(t)}{\partial \boldsymbol{\theta}}dt.
\end{equation}
We can see that we are integrating the infinitesimal contributions of parameters $\boldsymbol{\theta}$, at each time $t$. In case that $\boldsymbol{\theta}(t)=g(t,\boldsymbol{\theta})=\boldsymbol{\theta}$, then we have 
\begin{equation*}
      \frac{d \boldsymbol{z}(T)}{d \boldsymbol{\theta}}=\int_0^T \frac{\partial \boldsymbol{z}(T)}{\partial \boldsymbol{z}(t)} \frac{\partial f(\boldsymbol{z}(t),\boldsymbol{\theta})}{\partial \boldsymbol{\theta}}\frac{\partial \boldsymbol{\theta}}{\partial \boldsymbol{\theta}}dt=\int_0^T \frac{\partial \boldsymbol{z}(T)}{\partial \boldsymbol{z}(t)} \frac{\partial f(\boldsymbol{z}(t),\boldsymbol{\theta})}{\partial \boldsymbol{\theta}}Idt=
\end{equation*}
\begin{equation}\label{d9}
=\int_0^T \frac{\partial \boldsymbol{z}(T)}{\partial \boldsymbol{z}(t)} \frac{\partial f(\boldsymbol{z}(t),\boldsymbol{\theta})}{\partial \boldsymbol{\theta}}dt=-
      \int_T^0 \frac{\partial \boldsymbol{z}(T)}{\partial \boldsymbol{z}(t)} \frac{\partial f(\boldsymbol{z}(t),\boldsymbol{\theta})}{\partial \boldsymbol{\theta}}dt.
\end{equation}
\begin{theorem}\label{otheorem2}
Let $\frac{dz(t)}{dt}= f(\boldsymbol{z}(t),\boldsymbol{\theta}(t),t)$, where $f$ is continuous in $t$ and Lipschitz continuous in $\boldsymbol{z}(t)$ and $\boldsymbol{\theta}(t)$. Then the following holds: 
\begin{equation}\label{d10}
      \frac{d\boldsymbol{z}(T)}{d \boldsymbol{\theta}}=\int_0^T \frac{\partial \boldsymbol{z}(T)}{\partial \boldsymbol{z}(t)} \frac{\partial f(\boldsymbol{z}(t),\boldsymbol{\theta} (t), t)}{\partial \boldsymbol{\theta}(t)}\frac{\partial \boldsymbol{\theta}(t)}{\partial \boldsymbol{\theta}}dt.
\end{equation}
\end{theorem}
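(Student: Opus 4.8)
The plan is to discretize the ODE with an explicit Euler scheme on a uniform grid $t_i = i\epsilon$, $\epsilon = T/n$, and to treat the parameter values $\boldsymbol{\theta}(t_i)$ at distinct time steps as \emph{independent} variables, so that $\boldsymbol{z}_{i+1} = \boldsymbol{z}_i + \epsilon f(\boldsymbol{z}_i, \boldsymbol{\theta}(t_i), t_i)$. The first key observation is a semigroup (Markovian) property of the flow: since the evolution past time $t_{k+1}$ is completely determined by the state $\boldsymbol{z}(t_{k+1})$ together with the later parameter blocks $\boldsymbol{\theta}(t_{k+1}),\dots,\boldsymbol{\theta}(t_{n-1})$, the terminal state $\boldsymbol{z}_n$ depends on the single block $\boldsymbol{\theta}(t_k)$ only through $\boldsymbol{z}(t_k+\epsilon)$. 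Because $\boldsymbol{z}(t_k)$ itself carries no dependence on $\boldsymbol{\theta}(t_k)$, the total derivative collapses to a single chain-rule term $\frac{d\boldsymbol{z}_n}{d\boldsymbol{\theta}(t_k)} = \frac{\partial \boldsymbol{z}_n}{\partial \boldsymbol{z}(t_k+\epsilon)}\frac{\partial \boldsymbol{z}(t_k+\epsilon)}{\partial \boldsymbol{\theta}(t_k)}$, every other partial vanishing.

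Next I would evaluate the two factors to leading order in $\epsilon$. Differentiating the Euler update gives $\frac{\partial \boldsymbol{z}(t_k+\epsilon)}{\partial \boldsymbol{\theta}(t_k)} = \frac{\partial f(\boldsymbol{z}(t_k),\boldsymbol{\theta}(t_k),t_k)}{\partial \boldsymbol{\theta}(t_k)}\epsilon + O(\epsilon^2)$, while the relation $\frac{\partial \boldsymbol{z}(t_k+\epsilon)}{\partial \boldsymbol{z}(t_k)} = I + \epsilon \frac{\partial f}{\partial \boldsymbol{z}(t_k)} + O(\epsilon^2)$ together with the chain rule yields $\frac{\partial \boldsymbol{z}_n}{\partial \boldsymbol{z}(t_k+\epsilon)} = \frac{\partial \boldsymbol{z}_n}{\partial \boldsymbol{z}(t_k)} + O(\epsilon)$. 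Substituting both into the collapsed chain rule produces $\frac{\partial \boldsymbol{z}_n}{\partial \boldsymbol{\theta}(t_k)} = \frac{\partial \boldsymbol{z}_n}{\partial \boldsymbol{z}(t_k)} \frac{\partial f(\boldsymbol{z}(t_k),\boldsymbol{\theta}(t_k),t_k)}{\partial \boldsymbol{\theta}(t_k)}\epsilon + O(\epsilon^2)$, the $O(\epsilon)$ correction being absorbed because it multiplies a factor already of order $\epsilon$.

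To pass to the global parameter $\boldsymbol{\theta}$, I would set $\boldsymbol{\theta}(t_k) = g(t_k,\boldsymbol{\theta})$ and apply the multivariate chain rule across all time slices, $\frac{d\boldsymbol{z}_n}{d\boldsymbol{\theta}} = \sum_{k=0}^{n-1} \frac{\partial \boldsymbol{z}_n}{\partial \boldsymbol{\theta}(t_k)}\frac{d\boldsymbol{\theta}(t_k)}{d\boldsymbol{\theta}}$, so that each summand carries one factor of $\epsilon$ and the total becomes a Riemann sum for the claimed integral. Letting $\epsilon \to 0$ then delivers $\frac{d\boldsymbol{z}(T)}{d\boldsymbol{\theta}} = \int_0^T \frac{\partial \boldsymbol{z}(T)}{\partial \boldsymbol{z}(t)}\frac{\partial f(\boldsymbol{z}(t),\boldsymbol{\theta}(t),t)}{\partial \boldsymbol{\theta}(t)}\frac{\partial \boldsymbol{\theta}(t)}{\partial \boldsymbol{\theta}}\,dt$.

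The main obstacle I anticipate is not the formal manipulation but the rigorous justification of the $\epsilon \to 0$ limit. There are $n = T/\epsilon$ summands, each carrying an $O(\epsilon^2)$ error, so I must show these accumulate to only $O(\epsilon)$ and hence vanish; this requires uniform bounds on $\partial f/\partial \boldsymbol{z}$ and $\partial f/\partial \boldsymbol{\theta}$ along the trajectory, which the Lipschitz hypotheses on $f$ in $\boldsymbol{z}$ and $\boldsymbol{\theta}$ supply, with existence and uniqueness of the trajectory guaranteed by Picard--Lindel\"of. I would also invoke convergence of the discretized adjoint factor $\partial \boldsymbol{z}_n/\partial \boldsymbol{z}(t_k)$ to the continuous sensitivity $\partial \boldsymbol{z}(T)/\partial \boldsymbol{z}(t)$, which is governed by the cable rule of Theorem \ref{otheorem1}, and continuity of the integrand in $t$ to identify the Riemann-sum limit with the stated integral.
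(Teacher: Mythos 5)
Your proposal is correct and follows essentially the same route as the paper's Appendix \ref{appendixB}: Euler discretization with independent parameter blocks per time step, the observation that $\boldsymbol{z}_n$ depends on $\boldsymbol{\theta}(t_k)$ only through $\boldsymbol{z}(t_k+\epsilon)$, first-order expansion of the two chain-rule factors, and passage to a Riemann sum. Your closing remarks on controlling the accumulated $O(\epsilon^2)$ errors and on convergence of the discretized sensitivity are in fact slightly more explicit than the paper's own justification.
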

\begin{proof}
As before, since $f$ is continuous in $t$ and Lipschitz continuous in $z$ then due to Picard–Lindelöf theorem, $z(T)$ exists and is unique. From Theorem \ref{otheorem1}, we can establish the existence of $    \frac{d \boldsymbol{z}(T)}{d \boldsymbol{z}(t)}$. Furthermore, since $f$ is  Lipschitz continuous in $\theta(t)$, then $\frac{\partial f(\boldsymbol{\theta}(t))}{\partial \boldsymbol{\theta}(t)}$ exists almost everywhere. Based on the previous derivations we reach the desired conclusion.
\end{proof}

\section{Generalization of Continuous Backpropagation into Piecewise Continuous Backpropagation}\label{appendixC}
We define $\boldsymbol{z}(t)$ as before, with the only difference being that its derivative is discontinuous at a point (say $\frac{T}{2}$):
\begin{align}\label{e1}
   \boldsymbol{z}(t)=
    \begin{cases}
    \boldsymbol{z}(0)+\int_0^t f(\boldsymbol{z}(\tau),\boldsymbol{\theta}(\tau))d\tau,\ t\in[0,\frac{T}{2}]
      \\
    \boldsymbol{z}(0)+\int_0^\frac{T}{2} f(\boldsymbol{z}(\tau),\boldsymbol{\theta}(\tau))d\tau+\int_\frac{T}{2}^t g(\boldsymbol{z}(\tau),\boldsymbol{\phi}(\tau))d\tau, \ t\in(\frac{T}{2},T]
    \end{cases}  
\end{align}
As in \cite{node_chen2018}, we can get
\begin{align}\label{e2}
   \frac{d (\frac{\partial L}{\partial \boldsymbol{z}(t)})}{dt}=
    \begin{cases}
    -\frac{\partial L}{\partial \boldsymbol{z}(t)}\frac{\partial f(\boldsymbol{z}(t),\boldsymbol{\theta}(t)) }{\partial \boldsymbol{z}(t)},\ t\in[0,\frac{T}{2}]
      \\
    -\frac{\partial L}{\partial \boldsymbol{z}(t)}\frac{\partial g(\boldsymbol{z}(t),\boldsymbol{\phi}(t)) }{\partial \boldsymbol{z}(t)},\ t\in(\frac{T}{2},T],
    \end{cases}  
\end{align}
thus
\begin{align}\label{e3}
   \frac{\partial L}{\partial \boldsymbol{z}(t)}=
    \begin{cases}
    \frac{\partial L}{\partial \boldsymbol{z}(T)}-\int_T^t \frac{\partial L}{\partial \boldsymbol{z}(\tau)}\frac{\partial g(\boldsymbol{z}(\tau),\boldsymbol{\phi}(\tau)) }{\partial \boldsymbol{z}(\tau)}d\tau,\ t\in(\frac{T}{2},T] 
      \\
    \frac{\partial L}{\partial \boldsymbol{z}(T)}-\int_\frac{T}{2}^t \frac{\partial L}{\partial \boldsymbol{z}(\tau)}\frac{\partial f(\boldsymbol{z}(\tau),\boldsymbol{\theta}(\tau)) }{\partial \boldsymbol{z}(\tau)}d\tau - \int_T^\frac{T}{2} \frac{\partial L}{\partial \boldsymbol{z}(\tau)}\frac{\partial g(\boldsymbol{z}(\tau)\boldsymbol{\phi}(\tau)) }{\partial \boldsymbol{z}(\tau)}d\tau,\ t\in(0,\frac{T}{2}].
    \end{cases}  
\end{align}
The approach developed in Appendix \ref{appendixB} can be used to generalize continuous backpropagation into piecewise continuous backpropagation. Indeed, identically as before, we have the first order appoximation:
\begin{align}\label{e5}
   \frac{d L_\epsilon}{d \boldsymbol{\theta}}=\sum_{k=0}^n \frac{\partial L_\epsilon}{\partial \boldsymbol{z}(t_k)} \frac{\partial f(\boldsymbol{z}(t_k),\boldsymbol{\theta} (t_k))}{\partial \boldsymbol{\theta}(t_k)}\frac{\partial \boldsymbol{\theta}(t_k)}{\partial \boldsymbol{\theta}}\epsilon+\sum_{k=0}^n \frac{\partial L_\epsilon}{\partial \boldsymbol{z}(t_k)} \frac{\partial g(\boldsymbol{z}(t_k),\boldsymbol{\phi} (t_k))}{\partial \boldsymbol{\phi}(t_k)}\frac{\partial \boldsymbol{\phi}(t_k)}{\partial \boldsymbol{\theta}}\epsilon
\end{align}
Taking the limit we get:
\begin{equation}\label{e6}
      \frac{dL}{d \boldsymbol{\theta}}=\int_0^\frac{T}{2}\frac{\partial L}{\partial \boldsymbol{z}(t)} \frac{\partial f(\boldsymbol{z}(t),\boldsymbol{\theta} (t))}{\partial \boldsymbol{\theta}(t)}\frac{\partial \boldsymbol{\theta}(t)}{\partial \boldsymbol{\theta}}dt+\int_\frac{T}{2}^T \frac{\partial L}{\partial \boldsymbol{z}(t)} \frac{\partial g(\boldsymbol{z}(t),\boldsymbol{\phi} (t))}{\partial \boldsymbol{\theta}(t)}\frac{\partial \boldsymbol{\phi}(t)}{\partial \boldsymbol{\theta}}dt
\end{equation}
and in the same manner we get:
\begin{equation}\label{e7}
      \frac{dL}{d \boldsymbol{\phi}}=\int_0^\frac{T}{2}\frac{\partial L}{\partial \boldsymbol{z}(t)} \frac{\partial f(\boldsymbol{z}(t),\boldsymbol{\theta} (t))}{\partial \boldsymbol{\theta}(t)}\frac{\partial \boldsymbol{\theta}(t)}{\partial \boldsymbol{\phi}}dt+\int_\frac{T}{2}^T\frac{\partial L}{\partial \boldsymbol{z}(t)} \frac{\partial g(\boldsymbol{z}(t),\boldsymbol{\phi} (t))}{\partial \boldsymbol{\theta}(t)}\frac{\partial \boldsymbol{\phi}(t)}{\partial \boldsymbol{\phi}}dt,
\end{equation}
If we set $\boldsymbol{\theta}(t)=\boldsymbol{\theta}$ and $\boldsymbol{\phi}(t)=\boldsymbol{\phi}$, then we get: 

\begin{equation}\label{e8}
      \frac{dL}{d \boldsymbol{\theta}}=\int_0^\frac{T}{2}\frac{\partial L}{\partial \boldsymbol{z}(t)} \frac{\partial f(\boldsymbol{z}(t),\boldsymbol{\theta} )}{\partial \boldsymbol{\theta}}dt,
\end{equation}
and
\begin{equation}\label{e9}
      \frac{dL}{d \boldsymbol{\phi}}=\int_\frac{T}{2}^T\frac{\partial L}{\partial \boldsymbol{z}(t)} \frac{\partial g(\boldsymbol{z}(t),\boldsymbol{\phi})}{\partial \boldsymbol{\phi}}dt.
\end{equation}

\section{Additional Generated Samples}\label{appendixD}
Below in Figure \ref{high_mnist_sample}, Figure \ref{high_cifar_sample} and Figure \ref{high_celeba_sample} , are presented additional examples of generated samples of MNIST, CIFAR-10 and CelebA($32\times32$) by AFFJORD, respectively. 
\begin{figure}
\center
\includegraphics[width=0.55\textwidth]{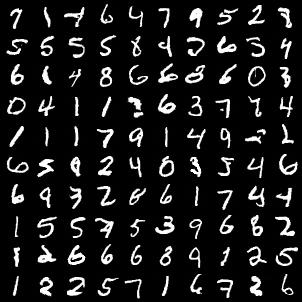}
\caption{Generated samples of MNIST by AFFJORD}
\label{high_mnist_sample}
\end{figure}

\begin{figure}
\center
\includegraphics[width=0.55\textwidth]{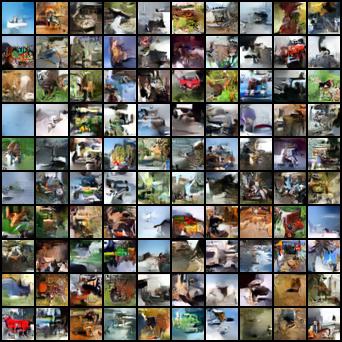}
\caption{Generated samples of CIFAR-10 by AFFJORD}
\label{high_cifar_sample}
\end{figure}

\begin{figure}
\center
\includegraphics[width=0.55\textwidth]{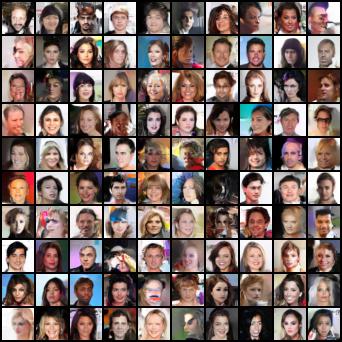}
\caption{Generated samples of CelebA($32\times32$) by AFFJORD}
\label{high_celeba_sample}
\end{figure}

In addition in Figure \ref{high_mnist_sample_FFJORD}, Figure \ref{high_cifar_sample_FFJORD} and Figure \ref{high_celeba_sample_FFJORD}, are presented additional examples of generated samples of MNIST, CIFAR-10 and CelebA($32\times32$) by FFJORD, respectively.
\begin{figure}
\center
\includegraphics[width=0.55\textwidth]{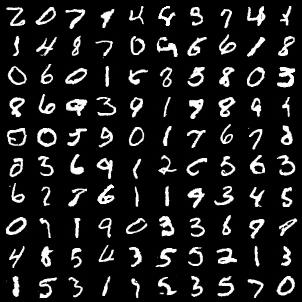}
\caption{Generated samples of MNIST by FFJORD}
\label{high_mnist_sample_FFJORD}
\end{figure}

\begin{figure}
\center
\includegraphics[width=0.55\textwidth]{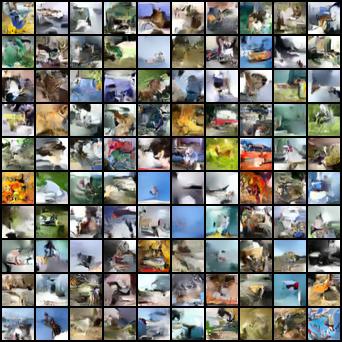}
\caption{Generated samples of CIFAR-10 by FFJORD}
\label{high_cifar_sample_FFJORD}
\end{figure}

\begin{figure}
\center
\includegraphics[width=0.55\textwidth]{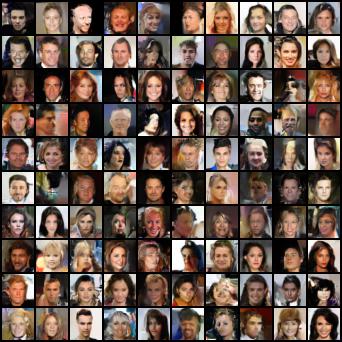}
\caption{Generated samples of CelebA ($32\times 32$) by FFJORD}
\label{high_celeba_sample_FFJORD}
\end{figure}

\section{Deriving the Instantaneous Change of Variable via the Cable Rule}\label{appendixE}
The trace of a commutator $[\boldsymbol{X},\boldsymbol{Y}]=\boldsymbol{X}\boldsymbol{Y}-\boldsymbol{Y}\boldsymbol{X}$ is always zero, hence we prove below that for all terms $\boldsymbol{\Omega}_{k>1}(t)$ given in Equations \ref{a22} in Appendix \ref{appendixA}, we have $tr(\boldsymbol{\Omega}_k(t))=0$. 

Indeed, since the trace and the integral are interchangeable we have:
\begin{equation*}
tr(\boldsymbol{\Omega}_2(t))=tr\frac{1}{2} \int_0^t dt_1 \int_0^{t_1} dt_2 \, [\boldsymbol{A}(t_1), \boldsymbol{A}(t_2)]
 \end{equation*}
 \begin{equation}\label{ee1}
=\frac{1}{2} \int_0^t dt_1 \int_0^{t_1} dt_2 \, tr[\boldsymbol{A}(t_1), \boldsymbol{A}(t_2)]=0.
 \end{equation}

 \begin{equation*}
   tr(\boldsymbol{\Omega}_3(t)) = \frac{1}{6} \int_0^t dt_1 \int_0^{t_1} dt_2 \int_0^{t_2} dt_3 \,
                 \Bigl(tr \big[\boldsymbol{A}(t_1), [\boldsymbol{A}(t_2), \boldsymbol{A}(t_3)]\big] +
 \end{equation*}
  \begin{equation}\label{ee2}
  + tr\big[\boldsymbol{A}(t_3), [\boldsymbol{A}(t_2), \boldsymbol{A}(t_1)]\big]\Bigr)
 \end{equation}
  \begin{equation}\label{ee3}
 = \frac{1}{6} \int_0^t dt_1 \int_0^{t_1}(0+0) dt_2 =0,
 \end{equation}
  and so on for $k>3$.

 The only exception is the case when $k=1$:
 \begin{align}\label{cablerule6a}
tr(\boldsymbol{\Omega}_1(t))=tr\int_0^t \boldsymbol{A}(t_1) dt_1=\int_0^ttr \frac{\partial f(\boldsymbol{z}(t_1),t_1,\boldsymbol{\theta})}{\partial \boldsymbol{z}(t_1)}dt_1.
 \end{align}
Finally, using Jacobi's formula, we conclude that:
\begin{equation}\label{cablerule7a}
\log{|\frac{d \boldsymbol{z}(T)}{d \boldsymbol{z}(0)}|}=\log{|e^{\boldsymbol{\Omega}(T)}|}=\log{e^{tr(\boldsymbol{\Omega}(T))}}={\int_0^T tr \frac{\partial f(\boldsymbol{z}(t),t,\boldsymbol{\theta})}{\partial \boldsymbol{z}(t)}}dt.
\end{equation}

\section{Simplifications in Section \ref{aug_neural_ODE_flows}}\label{appendixF}
First we notice that if $z^*(0)$ does not depend on $z(0)$, then 
\begin{equation}\label{f1}
\frac{d \boldsymbol{z}(T)}{d \boldsymbol{z}(0)} =\begin{bmatrix}
I,0
\end{bmatrix}
e^{ \begin{bmatrix}
\int_0^T \frac{\partial f(\boldsymbol{z}(t),\boldsymbol{z}^*(t),\boldsymbol{\theta})}{\partial \boldsymbol{z}(t)} dt&\int_0^T \frac{\partial f(\boldsymbol{z}(t),\boldsymbol{z}^*(t),\boldsymbol{\theta})}{\partial \boldsymbol{z}^*(t)} dt\\
\int_0^T \frac{\partial g(\boldsymbol{z}(t),\boldsymbol{z}^*(t),\boldsymbol{\phi})}{\partial \boldsymbol{z}(t)} dt&\int_0^T \frac{\partial g(\boldsymbol{z}(t),\boldsymbol{z}^*(t),\boldsymbol{\phi})}{\partial \boldsymbol{z}^*(t)} dt
\end{bmatrix}+\boldsymbol{\Omega}_2(T)+...}
\ \
\begin{bmatrix}
I\\
\frac{d\boldsymbol{z}^{*}(0)}{d \boldsymbol{z}(0)}
\end{bmatrix},
\end{equation}
becomes 
\begin{equation}\label{f2}
|\frac{d \boldsymbol{z}(T)}{d \boldsymbol{z}(0)}| =|\begin{bmatrix}
I,0
\end{bmatrix}
e^{ \begin{bmatrix}
\int_0^T \frac{\partial f(\boldsymbol{z}(t),\boldsymbol{z}^*(t),\boldsymbol{\theta})}{\partial \boldsymbol{z}(t)} dt& \int_0^T \frac{\partial f(\boldsymbol{z}(t),\boldsymbol{z}^*(t),\boldsymbol{\theta})}{\partial \boldsymbol{z}^*(t)} dt\\
\int_0^T \frac{\partial g(\boldsymbol{z}(t),\boldsymbol{z}^*(t),\boldsymbol{\phi})}{\partial \boldsymbol{z}(t)} dt & \int_0^T \frac{\partial g(\boldsymbol{z}(t),\boldsymbol{z}^*(t),\boldsymbol{\phi})}{\partial \boldsymbol{z}^*(t)} dt
\end{bmatrix}+\boldsymbol{\Omega}_2(T)+...}
\ \
\begin{bmatrix}
I\\
0
\end{bmatrix}|.
\end{equation}
On the other hand, if the augmented dimensions do not depend on the main dimensions then $\frac{\partial g(\boldsymbol{z}(t),\boldsymbol{z}^*(t),\boldsymbol{\phi})}{\partial \boldsymbol{z}(t)}=0$. This implies that
\begin{equation}\label{f3}
 \boldsymbol{A}(t)= \begin{bmatrix}
\frac{\partial f(\boldsymbol{z}(t),\boldsymbol{z}^*(t),\boldsymbol{\theta})}{\partial \boldsymbol{z}(t)} & \frac{\partial f(\boldsymbol{z}(t),\boldsymbol{z}^*(t),\boldsymbol{\theta})}{\partial \boldsymbol{z}^*(t)} \\
 \frac{\partial g(\boldsymbol{z}(t),\boldsymbol{z}^*(t),\boldsymbol{\phi})}{\partial \boldsymbol{z}(t)} & \frac{\partial g(\boldsymbol{z}(t),\boldsymbol{z}^*(t),\boldsymbol{\phi})}{\partial \boldsymbol{z}^*(t)} 
\end{bmatrix} 
\rightarrow 
\begin{bmatrix}
\frac{\partial f(\boldsymbol{z}(t),\boldsymbol{z}^*(t),\boldsymbol{\theta})}{\partial \boldsymbol{z}(t)} & \frac{\partial f(\boldsymbol{z}(t),\boldsymbol{z}^*(t),\boldsymbol{\theta})}{\partial \boldsymbol{z}^*(t)} \\
 0 & \frac{\partial g(\boldsymbol{z^*}(t),\boldsymbol{\phi})}{\partial \boldsymbol{z}^*(t)} 
\end{bmatrix}. 
\end{equation}
Hence,
\begin{equation}\label{f4}
\boldsymbol{\Omega}_1(t)=\int_0^t dt_1  \begin{bmatrix}
\frac{\partial f(\boldsymbol{z}(t_1),\boldsymbol{z}^*(t_1),\boldsymbol{\theta})}{\partial \boldsymbol{z}(t_1)} & \frac{\partial f(\boldsymbol{z}(t_1),\boldsymbol{z}^*(t_1),\boldsymbol{\theta})}{\partial \boldsymbol{z}^*(t_1)} \\
 0& \frac{\partial g(\boldsymbol{z^*}(t_1),\boldsymbol{\phi})}{\partial \boldsymbol{z}^*(t_1)} 
\end{bmatrix}=\begin{bmatrix}\int_0^t dt_1
\frac{\partial f(\boldsymbol{z}(t_1),\boldsymbol{z}^*(t_1),\boldsymbol{\theta})}{\partial \boldsymbol{z}(t_1)} & \boldsymbol{B}_1(t) \\
 0& \boldsymbol{D}_1(t)
\end{bmatrix}
\end{equation}
$\ \ \ \ \ \ \boldsymbol{\Omega}_2(t)=$
\begin{equation*}\label{f5}
\int_0^t dt_1 \int_0^{t_1} dt_2 \, \begin{bmatrix}
\frac{\partial f(\boldsymbol{z}(t_2),\boldsymbol{z}^*(t_2),\boldsymbol{\theta})}{\partial \boldsymbol{z}(t_2)} & \frac{\partial f(\boldsymbol{z}(t_2),\boldsymbol{z}^*(t_2),\boldsymbol{\theta})}{\partial \boldsymbol{z}^*(t_2)} \\
 0 & \frac{\partial g(\boldsymbol{z^*}(t_2),\boldsymbol{\phi})}{\partial \boldsymbol{z}^*(t_2)} 
\end{bmatrix}\begin{bmatrix}
\frac{\partial f(\boldsymbol{z}(t_1),\boldsymbol{z}^*(t_1),\boldsymbol{\theta})}{\partial \boldsymbol{z}(t_1)} & \frac{\partial f(\boldsymbol{z}(t_1),\boldsymbol{z}^*(t_1),\boldsymbol{\theta})}{\partial \boldsymbol{z}^*(t_1)} \\
 0& \frac{\partial g(\boldsymbol{z^*}(t_1),\boldsymbol{\phi})}{\partial \boldsymbol{z}^*(t_1)} 
\end{bmatrix}
\end{equation*}
\begin{equation*}\label{f6}
-\int_0^t dt_1 \int_0^{t_1} dt_2 \, \begin{bmatrix}
\frac{\partial f(\boldsymbol{z}(t_1),\boldsymbol{z}^*(t_1),\boldsymbol{\theta})}{\partial \boldsymbol{z}(t_1)} & \frac{\partial f(\boldsymbol{z}(t_1),\boldsymbol{z}^*(t_1),\boldsymbol{\theta})}{\partial \boldsymbol{z}^*(t_1)} \\
 0& \frac{\partial g(\boldsymbol{z^*}(t_1),\boldsymbol{\phi})}{\partial \boldsymbol{z}^*(t_1)} 
\end{bmatrix}\begin{bmatrix}
\frac{\partial f(\boldsymbol{z}(t_2),\boldsymbol{z}^*(t_2),\boldsymbol{\theta})}{\partial \boldsymbol{z}(t_2)} & \frac{\partial f(\boldsymbol{z}(t_2),\boldsymbol{z}^*(t_2),\boldsymbol{\theta})}{\partial \boldsymbol{z}^*(t_2)} \\
 0 & \frac{\partial g(\boldsymbol{z^*}(t_2),\boldsymbol{\phi})}{\partial \boldsymbol{z}^*(t_2)} 
\end{bmatrix}
\end{equation*}
\begin{equation*}\label{f7}
= \begin{bmatrix}
\int_0^t dt_1 \int_0^{t_1} dt_2 [\frac{\partial f(\boldsymbol{z}(t_1),\boldsymbol{z}^*(t_1),\boldsymbol{\theta})}{\partial \boldsymbol{z}(t_1)},\frac{\partial f(\boldsymbol{z}(t_2),\boldsymbol{z}^*(t_2),\boldsymbol{\theta})}{\partial \boldsymbol{z}(t_2)}] &\boldsymbol{B}_2(t) \\
 0& \boldsymbol{D}_2(t)
\end{bmatrix}=\begin{bmatrix}
 \boldsymbol{\Omega}_2^{[z]}(t) & \boldsymbol{B}_2(t) \\
 0 & \boldsymbol{D}_2(t)
\end{bmatrix}.
\end{equation*}
In this way we can prove that 
\begin{equation*}\label{f8}
\boldsymbol{\Omega}(t)=\sum_{k=1}^{\infty}\boldsymbol{\Omega}(t)_k=\begin{bmatrix}
\sum_{k=1}^{\infty} \boldsymbol{\Omega}^{[z]}_k(t) & \sum_{k=1}^{\infty}\boldsymbol{B}_k(t) \\
 0& \sum_{k=1}^{\infty}\boldsymbol{D}_k(t)\end{bmatrix}=\begin{bmatrix}
 \boldsymbol{\Omega}^{[z]}(t) & \boldsymbol{B}(t) \\
  0 & \boldsymbol{D}(t)
\end{bmatrix}.
\end{equation*}
Considering that the exponential of a matrix whose lower left block is zero, will still have a zero lower left block, we have:
\begin{equation*}\label{f9}
\frac{dz(t)}{dz(0)}=e^{\boldsymbol{\Omega}(t)}=\begin{bmatrix}
 e^{\boldsymbol{\Omega}^{[z]}(t)} & \bar{\boldsymbol{B}}(t) \\
 0 & \bar{\boldsymbol{D}}(t)
\end{bmatrix}.
\end{equation*}
Finally,
\begin{equation}\label{f10}
|\frac{d \boldsymbol{z}(T)}{d \boldsymbol{z}(0)}| =|\begin{bmatrix}
I,0
\end{bmatrix}
\begin{bmatrix}
 e^{\boldsymbol{\Omega}^{[z]}(t)} & \bar{\boldsymbol{B}}(t) \\
 0 & \bar{\boldsymbol{D}}(t)
\end{bmatrix}
\ \
\begin{bmatrix}
I\\
0
\end{bmatrix}|=|e^{\boldsymbol{\Omega}^{[z]}(t)}|=e^{\int_0^T tr\frac{\partial f(\boldsymbol{z}(t),\boldsymbol{z}^*(t),\boldsymbol{\theta})}{\partial \boldsymbol{z}(t)}dt+0+...0+...}.
\end{equation}


\section{Cable Rule Derived via Equation \ref{adjoint_method2}}\label{appendixG}

Differentiating Equation \ref{adjoint_method2} in Appendix \ref{appendixA}, we get:

\begin{equation}\label{g1}
\frac{d\big(\frac{dL}{d \boldsymbol{z}(t)}\big)}{dt}=-\frac{dL}{d \boldsymbol{z}(t)}\frac{\partial f(\boldsymbol{z}(t),t,\boldsymbol{\theta})}{\partial \boldsymbol{z}(t)}. 
\end{equation}

Choosing $L$ to be $\boldsymbol{z}(0)$, we have
\begin{equation}\label{g2}
\frac{d\big(\frac{d \boldsymbol{z}(0)}{d \boldsymbol{z}(t)}\big)}{dt}=-\frac{d \boldsymbol{z}(0)}{d \boldsymbol{z}(t)}\frac{\partial f(\boldsymbol{z}(t),t,\boldsymbol{\theta})}{\partial \boldsymbol{z}(t)}. 
\end{equation}

We define $\boldsymbol{A}(t):=\frac{d \boldsymbol{z}(0)}{d \boldsymbol{z}(t)}$, and $\boldsymbol{B}(t):=\boldsymbol{A}(t)^{-1}=\frac{d \boldsymbol{z}(t)}{d \boldsymbol{z}(0)}$, so that the equation above becomes 
\begin{equation}\label{g3}
\frac{d \boldsymbol{A}(t)}{dt}=-\boldsymbol{A}(t)\frac{\partial f(\boldsymbol{z}(t),t,\boldsymbol{\theta})}{\partial \boldsymbol{z}(t)},
\end{equation}
thus
\begin{equation}\label{g4}
-\boldsymbol{B}(t)\frac{d \boldsymbol{A}(t)}{dt}=\frac{\partial f(\boldsymbol{z}(t),t,\boldsymbol{\theta})}{\partial \boldsymbol{z}(t)}.
\end{equation}

Then from 

\begin{equation}\label{g5}
\boldsymbol{B}(t)\boldsymbol{A}(t)=I,
\end{equation}
we have
\begin{equation}\label{g6}
-\boldsymbol{B}(t)\frac{d\boldsymbol{A}(t)}{dt}=\frac{d\boldsymbol{B}(t)}{dt}\boldsymbol{A}(t), 
\end{equation}
thus, using this expression in Equation \ref{g4} we get

\begin{equation}\label{g7}
\frac{d\boldsymbol{B}(t)}{dt}\boldsymbol{A}(t)=\frac{\partial f(\boldsymbol{z}(t),t,\boldsymbol{\theta})}{\partial \boldsymbol{z}(t)}.
\end{equation}
Finally, we get the desired result by multiplying both sides from the right with $\boldsymbol{B}(t)$ .

\section{Equivalence Between Continuous Total Derivative Decomposition and the Continuous Backpropagation}\label{appendixH}

In Section \ref{adsens}. we derived the expression of continuous backpropagation from  the continuous total derivative decomposition. However, we can also derive the the formula of the continuous total derivative decomposition (Equation \ref{conttotderiv1}) from continuous backpropagation (Equation \ref{conttotderiv2}). Indeed, for a function $f=f(\boldsymbol{z}(t),\boldsymbol{\theta}(t),t)$, where $\boldsymbol{\theta}(t)=g(\boldsymbol{\theta},t)$, we can see $f$ as $h(\boldsymbol{z}(t),\boldsymbol{\theta},t)=f(\boldsymbol{z}(t),\boldsymbol{\theta}(t),t)$. Hence,
\begin{equation}\label{h1}
\frac{\partial L}{\partial \boldsymbol{\theta}}= \int_0^T \frac{\partial L}{\partial \boldsymbol{z}(t)} \frac{d h(\boldsymbol{z}(t),\boldsymbol{\theta},t)}{d \boldsymbol{\theta}}dt=\int_0^T \frac{\partial L}{\partial \boldsymbol{z}(t)} \frac{d f(\boldsymbol{z}(t),\boldsymbol{\theta}(t),t)}{d \boldsymbol{\theta}}dt,
\end{equation}
therefore
\begin{equation}\label{h2}
\frac{\partial L}{\partial \boldsymbol{\theta}}=\int_0^T \frac{\partial L}{\partial \boldsymbol{z}(t)} \frac{\partial f(\boldsymbol{z}(t),\boldsymbol{\theta}(t),t)}{\partial \boldsymbol{\theta}(t)}\frac{d\boldsymbol{\theta}(t)}{d \boldsymbol{\theta}}dt.
\end{equation}
Setting $L= \boldsymbol{z}(T)$, gives the desired result.

\section{Additional Details About the Experiments}\label{appendixI}

As mentioned in the main paper, we optimized the architecture of FFJORD first and fine-tuned its hyper-parameters.

This architecture remains unchanged in the AFFJORD model for fair comparison, and we only fine-tuned the augmented structure and dimension in addition. Indeed, as it can be seen, the results we report for FFJORD (0.96 MNIST, 3.37 CIFAR) are better than those reported on the original paper (0.99 MNIST, 3.40 CIFAR). The aforementioned improvements were a result of reducing the number of parameters during our tuning process, by reducing the number of CNF blocks from 2 to 1. The total number of parameters for different architectures of FFJORD and AFFJORD can be found on Table \ref{parameter-table}.
It should be emphasized that in the hypernet architecture, the parameters of AFFJORD are contained inside the parameters in the main architecture of FFJORD, so a bigger model is not being used, simply the flexibility of evolution of the main parameters in time is being increased.

 \begin{table}
\centering

  \caption{Number of Total Parameters of FFJORD and AFFJORD (Concat and Hypernet Architecture) for MNIST and CIFAR-10. The Multiscale Architecture Is Used in All Cases. \\ }

  \label{parameter-table}
  \begin{tabular}{l|ll|ll}
    \multirow{2}{*}{Dataset} &
      \multicolumn{2}{c|}{Concat} &
      \multicolumn{2}{c}{Hypernet} \\
    & FFJORD & AFFJORD & FFJORD & AFFJORD \\
   \hline
    MNIST & 400323 & 417629 & 783019 & 4556169  \\
    
    CIFAR10 & 679441 & 820815 & 1332361 & 8976115  \\
    CelebA ($32\times32$) & 679441 & 820815 & 1332361 & 8976115  \\
    \hline

  \end{tabular}
\end{table}

More generally, our experiments show that augmented architectures generally improve the performance of the standard non-augmented FFJORD counterparts, independently from the baseline architecture used. Furthermore, the farther the performance of FFJORD is from being optimal, the greater are the improvements when using AFFJORD. 

\end{document}